\title{Iterative Correction of Sensor Degradation and a Bayesian Multi-Sensor Data Fusion Method}
\author{
    Luka Kolar\thanks{All authors equally contributed to this work.} \quad Rok Šikonja\footnotemark[1] \quad Lenart Treven\footnotemark[1] \\
    ETH Zürich \\
    \texttt{\href{mailto:kolarl@student.ethz.ch}{kolarl@student.ethz.ch}} \quad
    \texttt{\href{mailto:rsikonja@student.ethz.ch}{rsikonja@student.ethz.ch}} \quad 
    \texttt{\href{mailto:trevenl@student.ethz.ch}{trevenl@student.ethz.ch}} \\
}
\date{\today}
\newtheorem{proposition}{Proposition}
\newtheorem{theorem}{Theorem}
\newtheorem{remark}{Remark}
\newtheorem{corollary}{Corollary}
\let\vec\mathbf
\newcommand{\R}{\mathbb{R}}
\newcommand{\T}{\mathsf{T}}
\DeclareMathOperator{\diag}{\mathrm{diag}}
\DeclareMathOperator{\tr}{\mathrm{tr}}
\newcommand{\norm}[2]{\left\lVert#1\right\rVert_{#2}}
\newcommand{\siga}{a}
\newcommand{\sigb}{b}
\newcommand{\sigs}{s}
\newcommand{\signa}{{n_a}}
\newcommand{\signb}{{n_b}}
\newcommand{\signm}{{n_m}}
\newcommand{\expa}{e_a}
\newcommand{\expb}{e_b}
\newcommand{\noisea}{\varepsilon_a}
\newcommand{\noiseb}{\varepsilon_b}
\newcommand{\timeseries}[3]{\left ( #1[#2]\right)_{#2 \in [#3]}}
\newcommand{\params}{\bm{\theta}}
\newcommand{\sparams}{\theta}
\newcommand{\algcorrectone}{\textsc{CorrectOne}\xspace}
\newcommand{\algcorrectboth}{\textsc{CorrectBoth}\xspace}
\newcommand{\fitcurve}{\textsc{FitCurveTo}\xspace}
\newcommand{\expmodel}{\textsc{Exp}\xspace}
\newcommand{\explinmodel}{\textsc{ExpLin}\xspace}
\newcommand{\isotonicmodel}{\textsc{Isotonic}\xspace}
\newcommand{\smoothmonmodel}{\textsc{SmoothMonotonic}\xspace}
\newcommand{\correctionmodel}{\textsc{Correction}\xspace}
\newcommand{\svgp}{\textsc{SparseVariationalGP}\xspace}
\begin{document}

\maketitle

\begin{abstract}
We present a novel method for inferring ground-truth signal from multiple degraded signals, affected by different amounts of sensor ``exposure''.
The algorithm learns a multiplicative degradation effect by performing iterative corrections of two signals solely from the ratio between them.
The degradation function $d$ should be continuous, satisfy monotonicity, and $d(0) = 1$.
We use smoothed monotonic regression method, where we easily incorporate the aforementioned criteria to the fitting part.
We include theoretical analysis and prove convergence to the ground-truth signal for the noiseless measurement model. 
Lastly, we present an approach to fuse the noisy corrected signals using Gaussian processes.
We use sparse Gaussian processes that can be utilized for a large number of measurements together with a specialized kernel that enables the estimation of noise values of all sensors.
The data fusion framework naturally handles data gaps and provides a simple and powerful method for observing the signal trends on multiple timescales (long-term and short-term signal properties).
The viability of correction method is evaluated on a synthetic dataset with known ground-truth signal.

\end{abstract}

\section{Introduction}
\label{section:Introduction}
The idea of the paper is inspired by the problem of radiometer degradation correction arising in the measurement process of total solar irradiance (TSI), such as in Variability of Solar Irradiance and Gravity Oscillations (VIRGO) experiment on Solar and Heliospheric Observatory (SOHO) spacecraft.

The aim is to reconstruct the ground-truth signal from two time series of measurements, which are produced by two identical sensors that have different sampling rates and are thus subjected to different amounts of exposure to radiation. 
Let's call the sensor with higher and lower sampling rate \emph{the main} and \emph{the back-up} sensor, respectively, following the convention of \citet{anklin1998}.

The exposure causes the sensors to degrade, i.e. their sensitivity decreases.
Since the two sensors are identically built, the degradation affects them at the same rate,
but both attain the same amount of degradation at different times due to different amount of exposure.
Furthermore, measurements are affected by a random additive noise, which we assumed to be Gaussian.

The goal of this paper is to devise a general and unified method to conduct \emph{a posteriori} degradation correction, and to combine information from two, or even multiple, noisy, corrected, signals into a reliable estimate of the ground-truth.

\subsection{Related Work}
\label{sec:related_work}

Several methods have been proposed for degradation correction of radiometers~\citep{frohlich2014, frohlich2003, anklin1998}, however these only address radiometers and not sensors in general and incorporate many modeling assumptions.
Furthermore, a wide variety of data-fusion techniques are available and well understood, and can be found in an extensive survey by \citet{castanedo2013}. 
Our data-fusion method explores the capabilities of Gaussian processes, proposed  by \citet{rasmussen2005}.
The state-of-the-art method applied for data fusion of TSI time series is proposed by \citet{dudokdewit2017}, and utilizes a data-driven multi-scale maximum likelihood method. 
Many domain specific methods often suffer from preconceived modeling bias and thus direct our work to develop a more general -- data-driven -- method.

The rest of this paper is organized as follows.
\Cref{section:DegradationModeling} describes the methods of degradation modeling and proposes two correction methods.
\Cref{section:ConvergenceThreorems} gives a theoretical justification for the proposed correction methods and establishes their convergence.
In \cref{section:DataFusion} methods for Bayesian data fusion are presented and described.
Finally, in \cref{section:Results} proposed methods are evaluated on a synthetic dataset.

\subsection{Notation}
\label{subsection:TimeSeriesNotation}

Throughout the paper, continuous and discrete time signals are differentiated: a parentheses notation denotes a continuous signal, whereas a square bracket notation denotes a discrete time series.
Continuous-time notation is used in development of the theory, whereas discrete-time in the description of the algorithms.

The signal from the main sensor is denoted as $\siga$ and from the back-up sensor as $\sigb$.
We assume that the main sensor is at any point in time except at the very beginning subjected to more exposure than the back-up sensor.
The indexed set $T_\siga = \{t^\siga_1, \ldots, t^\siga_\signa\}$ contains all sampling times of the sensor $\siga$, where $\signa$ denotes the number of measurements, and $\timeseries{\siga}{k}{\signa}$ the corresponding time series of measurements.
Similarly, we have for sensor $\sigb$.
We denote the ground-truth signal as $\sigs$.

\section{Degradation Modeling and Correction}
\label{section:DegradationModeling}
Degradation is modeled as a multiplicative effect on the sensitivity of both sensors and is described by the degradation function $d(\cdot)$.
It is a function of what we call the exposure or exposure ``time'' $e$. It takes the value of 1 at time 0, i.e. $d(0) = 1$, and is assumed to be a monotonically decreasing, continuous function.
The first assumption implies that instruments are not degraded at time $t = 0$, and the second corresponds to the fact that sensor performance does not improve with exposure to radiation.
As it turns out, the first assumption is of paramount importance in the modeling of degradation, not only being intuitively true, but providing a necessity for degradation correction as we show in \cref{section:ConvergenceThreorems}.
The second assumption improves interpretability and robustness of the proposed method.
The information on exposure of each sensor is supplied by the expert conducting the correction, here we use an estimate of cumulative sum of measured values.
We denote the exposure functions for the two sensors by $e_a(\cdot)$ and $e_b(\cdot)$.

Noiseless measurement model is defined in \cref{eq:degradation_model} and is used for the development of theory of degradation correction.
\begin{align}
\label{eq:degradation_model}
\begin{aligned}
\siga(t) &= \sigs(t) \cdot d(\expa(t)),\\
\sigb(t) &= \sigs(t) \cdot d(\expb(t)). \\
\end{aligned}
\end{align}

In order to get a realistic model, which accounts for the noisy nature of measurements, we introduce normally distributed additive noise with zero mean and a constant sensor-dependent variance, $\sigma_\siga^2$ and $\sigma_\sigb^2$.
We denote by $\noisea$ and $\noiseb$ white noise signals (zero mean and independent in time) that are independent of each other.
This yields noisy measurement model defined in \cref{eq:degradation_model_noise}.
\begin{align}
\label{eq:degradation_model_noise}
\begin{aligned}
\siga(t) &= \sigs(t) \cdot d(\expa(t)) + \noisea(t), && \noisea(t) \sim \mathcal{N}(0,\,\sigma_\siga^{2}), \\
\sigb(t) &= \sigs(t) \cdot d(\expb(t)) + \noiseb(t), && \noiseb(t) \sim \mathcal{N}(0,\,\sigma_\sigb^{2}).
\end{aligned}
\end{align}
 Note that we have $\expa(0) = \expb(0) = 0$.

\subsection{Degradation Modeling}
\label{sec:degradation_modeling}

The degradation function is determined solely from the ratio of signals $r(t) = \frac{a(t)}{b(t)}$ or more precisely from its discrete time counterpart $(r[k] = \frac{a[k]}{b[k]})_{k \in \signm}$, where $t_k \in T_m = \{t^m_1, \ldots, t^m_{\signm} \} = T_\siga \cap T_\sigb$.
Thus, the ratio is computed only at discrete times at which both sensors took a measurement.
However, our method could straightforwardly be extended to the case when there are no simultaneous measurements by using interpolation methods.

Next, we propose different modeling methods for learning the degradation function.
This can be formulated as a univariate regression problem, where we assume that there exists a function $f(\cdot, \params)$ parametrized by $\params$, which describes the relation between a predictor variable $x$ and a response variable $y$, taking the form of 
\begin{align}
\label{eq:regression_model}
\begin{aligned}
y = f(x, \params) + \varepsilon, \\
\end{aligned}
\end{align}
where $\varepsilon$ represents random noise with zero mean.

\subsubsection{Exponential families}

Many physical phenomena can be described by the family of exponential functions, because they arise naturally as the solutions to the differential equations.

Firstly, we propose a simple exponential function with parameters $\params = [\sparams_1, \sparams_2]^\T$, where $\sparams_1 > 0$ is the exponential decay and $\sparams_2$ is a scaling parameter, taking the form 
\begin{align}
\label{eq:exp_model}
\tag{\expmodel}
\begin{aligned}
f(x, \params) = 1 - e^{\sparams_1 \cdot \sparams_2} + e^{- \sparams_1 \cdot (x - \sparams_2)}.
\end{aligned}
\end{align}

Secondly, we propose an extension to \ref{eq:exp_model} by introducing an additional linear dependency term, taking the form
\begin{align}
\label{eq:exp_lin_model}
\tag{\explinmodel}
\begin{aligned}
f(x, \params) = 1 - e^{\sparams_1 \cdot \sparams_2} + e^{- \sparams_1 \cdot (x - \sparams_2)} + \sparams_3 \cdot x.
\end{aligned}
\end{align}
The latter model is adopted from the work of \citet{anklin1998}. Note that both models satisfy the condition $f(0, \params) = 1$  regardless of the values of $\params$.

\subsubsection{Monotonic Regression and Smoothed Monotonic Regression}

One of the main modeling challenges is the robustness of the proposed correction method.
By assumption, degradation function has to be a monotonically decreasing function and thus monotonic regression is a natural choice for its modeling.

Monotonic or isotonic regression requires predictor values to be in a strictly increasing order $x_1 < x_2 < \ldots < x_n$.
Its solution is a stepwise interpolating function determined by $n$ points $\params = [\sparams_1, \ldots, \sparams_n]^\T$ in monotonically decreasing order, which are obtained by solving the following optimization problem with $f(x_i, \params) = \sparams_i, \;\forall \, i \in [n]$, \citep{sysoev2019}
\begin{align}
\label{eq:monotonic_regression_problem}
\tag{\isotonicmodel}
\begin{aligned}
&\min_{\params} \sum_{i=1}^n \, \left (f(x_i, \params) - y_i \right )^2 = \min_{\params} \sum_{i=1}^n \, \left (\sparams_i - y_i \right )^2 \\
&\text{s. t. } \sparams_i \geq \sparams_{i+1} \text{ for $i \in \{1, \ldots, n-1\}$.}
\end{aligned}
\end{align}

This quadratic optimization problem provides a simple and powerful framework for enforcing additional constraints, such as $f(0, \params) = 1$ or convexity/concavity, which can be added to~\ref{eq:monotonic_regression_problem} as
\begin{align}
\label{eq:monotonic_regression_constraints}
\begin{aligned}
\sparams_1 = 1 && \text{to ensure} && f(0, \params) = 1, \\
\sparams_{i+1} - 2 \cdot \sparams_i + \sparams_{i-1} \geq 0 \text{ for $i \in \{2, \ldots, n-1\}$} && \text{to ensure} && \text{Convexity, } f''(x_i) \geq 0. \\ 
\end{aligned}
\end{align}

The practical issue with monotonic regression is that it resembles a discontinuous step function while we expect the function to be continuous and smooth.
In this regard, \citet{sysoev2019} propose a modification of \ref{eq:monotonic_regression_problem} by penalizing the difference between adjacent fitted response values, $\sparams_i$ and $\sparams_{i+1}$, by using an L2 regularization term.
This yields a smoothed monotonic regression problem formulated as
\begin{align}
\label{eq:smooth_monotonic_regression_problem}
\tag{\smoothmonmodel}
\begin{aligned}
& \min_{\params} \sum_{i=1}^n \, \left (\sparams_i - y_i \right )^2 + \sum_{i = 1}^{n-1} \lambda_i \cdot (\sparams_{i+1} - \sparams_i)^2 \\
&\text{s. t. } \sparams_i \geq \sparams_{i+1} \text{ for $i \in \{1, \ldots, n-1\}$},
\end{aligned}
\end{align}
where $\lambda_i$ for $i \in [n-1]$ are selected regularization parameters.
Note that similarly to~\ref{eq:monotonic_regression_problem} we can introduce additional constraints, such as~\eqref{eq:monotonic_regression_constraints}.
We choose $\lambda_1 = \ldots = \lambda_{n-1} = 1$.
The difference in naming models, isotonic vs. monotonic, is introduced only for the purpose of clarity. 

The stability of quadratic optimization problem~\ref{eq:smooth_monotonic_regression_problem} is guaranteed by first running~\ref{eq:monotonic_regression_problem}, then uniformly sampling the obtained function $f(\cdot, \params)$ at $\mathcal{X} = \{x^r_1, \ldots,x^r_m \}$, where $n \gg m$, and lastly running~\ref{eq:smooth_monotonic_regression_problem} with $\mathcal{D}' = \{(x^r_i, y^r_i)\}_{i=1}^m$.
Finally $f$ is obtained as a linear interpolation between each $x^r_i$ and $x^r_{i+1}$.

\subsection{Degradation Iterative Correction}

The process described in this section yields degradation-corrected versions of signals, denoted by $\siga_c$ and $\sigb_c$.
Here the noiseless measurement model, defined by~\cref{eq:degradation_model}, is used.
We discuss the rationale for not using~\cref{eq:degradation_model_noise} later.
Note that neither the ground-truth signal $\sigs$ nor the degradation function $d(\cdot)$ is known.
We present two methods with which $\sigs$ and $d(\cdot)$ can be approximately extracted just from time series $\timeseries{\siga}{k}{\signm}$ and $\timeseries{\sigb}{k}{\signm}$.
Method given by \cref{eq:correct_one} is described here, whereas method described with \cref{eq:correct_both} is analyzed in \cref{sec:ap_degradation_correction} for the sake of brevity.
In order to unambiguously show the computational steps on the given time series, we present these methods along with corresponding algorithms \ref{alg:correct_one} and \ref{alg:correct_both}.

Method $\algcorrectone$ performs iterative corrections of both signals, formulated as follows:
\begin{align}
\label{eq:correct_one}
\tag{\algcorrectone}
\begin{aligned}
r_n(\expa(t)) &= \frac{\siga_0(t)}{\sigb_n(t)} &
\siga_{n+1}(t) &= \frac{\siga_0(t)}{r_n(\expa(t))} &
\sigb_{n+1}(t) &= \frac{\sigb_0(t)}{r_n(\expb(t))} && \text{for $n = 0, 1, \ldots$}
\end{aligned}
\end{align}
where $\siga_0(t) = \siga(t) = \sigs(t) \cdot d(\expa(t))$ and $\sigb_0(t) = \sigb(t) = \sigs(t) \cdot d(\expb(t))$ holds.

Both algorithms return the best estimates of $\sigs$ for each instrument for $t_k \in T_m$ and $d(\cdot)$.
In section~\ref{section:ConvergenceThreorems} we show that indeed, $d_c(\cdot) \xrightarrow{} d(\cdot)$, $\siga_c(\cdot) \xrightarrow{} \sigs(\cdot)$ and $\sigb_c(\cdot) \xrightarrow{} \sigs(\cdot)$ holds.
After obtaining $d_c(\cdot)$ both time series $\timeseries{\siga}{k}{\signa}$, $\timeseries{\sigb}{k}{\signb}$ can be corrected as
\begin{align}
\label{eq:degradation_correction}
\tag{\correctionmodel}
\begin{aligned}
\siga_c(t) &= \frac{\siga(t)}{d_c(\expa(t))} \text{ and } \siga_c[k] = \frac{\siga[k]}{d_c(\expa[k])} && \text{for $k \in [\signa]$}, \\
\sigb_c(t) &= \frac{\sigb(t)}{d_c(\expb(t))} \text{ and } \sigb_c[k] = \frac{\sigb[k]}{d_c(\expb[k])} && \text{for $k \in [\signb]$}.
\end{aligned}
\end{align}

In $\algcorrectone$ at each iteration the ratio is computed between initial signal $\siga$ and corrected $b$, therefore the ratio is an approximation of the degradation function, and, in order to satisfy the modeling assumptions, robustness is enforced by fitting a function from the family of functions proposed in \autoref{sec:degradation_modeling}.
Empirically, the \smoothmonmodel has proven to be the most robust and realistic among all proposed.

\begin{algorithm}[ht]
\caption{$\algcorrectone(\timeseries{\siga}{k}{\signm}, \timeseries{\sigb}{k}{\signm}, \timeseries{\expa}{k}{\signm}, \timeseries{\expb}{k}{\signm})$} \label{alg:correct_one}
\begin{algorithmic}[1]
\State $a_c \gets a; \; b_c \gets b$ \Comment{Initial estimate of corrected signals}

\While{$\text{not converged}$} \Comment{E.g. $\norm{\siga_{i+1} - \siga_{i}}{2} /\norm{\siga_{i}}{2} + \norm{\sigb_{i+1} - \sigb_{i}}{2} / \norm{\sigb_{i}}{2} > \varepsilon$}
    \State $r \gets \frac{\siga}{\sigb_c}$ \Comment{Divide signals $\siga$ and $\sigb_c$ pointwise, i.e. $r[k] = \frac{\siga[k]}{\sigb_c[k]}$ $\forall \, k \in [\signm]$}
    \State $f(\cdot) \gets \fitcurve(\timeseries{\expa}{k}{\signm}, \timeseries{r}{k}{\signm})$  \Comment{Learn mapping $f \,:\, \expa \mapsto f(\expa)$}
    \State $\siga_c \gets \frac{\siga}{f(\expa)}$ \Comment{Correction update of signal $\siga$}
    \State $\sigb_c \gets \frac{\sigb}{f(\expb)}$ \Comment{Correction update of signal $\sigb$}
\EndWhile

\State $d_c(\cdot) \gets f(\cdot)$ \Comment{Final estimate of degradation function $d(\cdot)$}
\State \textbf{return} $\timeseries{\siga_c}{k}{\signm}, \timeseries{\sigb_c}{k}{\signm}, d_c(\cdot)$ \Comment{Return corrected signals and degradation function}
\end{algorithmic}
\end{algorithm}

\section{Convergence Theorems}
\label{section:ConvergenceThreorems}
This section provides some theoretical guarantees for the convergence of our methods for the noiseless measurement model~\eqref{eq:degradation_model}.
We show this first for simple exposure functions $\expa(t) = t$ and $\expb(t) = \frac{t}{2}$, and then generalize to arbitrary exposure functions. Here we present the result only for \algcorrectone. The results and analysis for \algcorrectboth follow similar lines and hence are deferred to the appendix, \autoref{sec:ap_convergence_theorems}.

\begin{proposition}[$\algcorrectone$ for $\expa(t) = t$ and $\expb(t) = \frac{t}{2}$]
    \label{thm:correction_both}
    Let $a_0(t) = s(t) \cdot d(t)$ and $b_0(t) = s(t)  \cdot d(\frac{t}{2})$ for $t \geq 0$, where $s(t)>0$ is the ground-truth signal and $d \,:\, \R_{\geq 0} \to [0,1]$ is a continuous degradation function with $d(0) = 1$.
    If we run algorithm for $n=0,1,\ldots$ ~:
    \begin{align*}
        r_n(t) = \frac{a_0(t)}{b_n(t)}, ~~
        a_{n+1}(t) = \frac{a_0(t)}{r_n(t)}, ~~
        b_{n+1}(t) = \frac{b_0(t)}{r_n(\frac{t}{2})},
    \end{align*}
    then we have $\forall \, t \geq 0: \lim_{n \to \infty} a_n(t) = \lim_{n \to \infty}b_n(t) = s(t)$ and $\lim_{n \to \infty} r_n(t) = d(t)$.
\end{proposition}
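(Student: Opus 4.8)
The plan is to collapse the coupled iteration into a single scalar recursion that admits an explicit closed form, and then to pass to the limit using only continuity of $d$ at $0$ together with $d(0)=1$. First I would observe that the $a$-update is essentially a no-op: substituting $r_n(t)=a_0(t)/b_n(t)$ into $a_{n+1}(t)=a_0(t)/r_n(t)$ yields $a_{n+1}(t)=b_n(t)$ for every $n\ge 0$, so the whole dynamics is carried by $b_n$. Next I would plug $r_n(t/2)=a_0(t/2)/b_n(t/2)=s(t/2)\,d(t/2)/b_n(t/2)$ into the $b$-update and use $b_0(t)=s(t)\,d(t/2)$; the common factor $d(t/2)$ cancels, leaving the clean recursion
\[
b_{n+1}(t)=\frac{s(t)}{s(t/2)}\,b_n\!\left(\tfrac{t}{2}\right),\qquad n\ge 0 .
\]
Here I would remark that every division appearing is legitimate because $s>0$ and, implicitly (so that the ratios in the algorithm are well defined), $d>0$ on the relevant domain.

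Then I would make the substitution $g_n(t):=b_n(t)/s(t)$, which linearizes the recursion to $g_{n+1}(t)=g_n(t/2)$ with initial datum $g_0(t)=d(t/2)$. A one-line induction gives $g_n(t)=d\!\left(t/2^{\,n+1}\right)$, hence $b_n(t)=s(t)\,d\!\left(t/2^{\,n+1}\right)$, and consequently $a_n(t)=b_{n-1}(t)=s(t)\,d\!\left(t/2^{\,n}\right)$ for $n\ge 1$ and $r_n(t)=a_0(t)/b_n(t)=d(t)/d\!\left(t/2^{\,n+1}\right)$. Now fix $t\ge 0$: since $t/2^{\,n}\to 0$ and $t/2^{\,n+1}\to 0$ as $n\to\infty$, continuity of $d$ at $0$ and $d(0)=1$ give $d(t/2^{\,n})\to 1$ and $d(t/2^{\,n+1})\to 1$, so $a_n(t)\to s(t)$, $b_n(t)\to s(t)$, and $r_n(t)\to d(t)/1=d(t)$, which is exactly the claim.

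I do not expect a genuine obstacle here: the entire argument hinges on spotting the change of variables $g_n=b_n/s$ that decouples the iteration, and on noticing that the self-similar scaling $t\mapsto t/2$ inherited from the exposure ratio $\expb=\expa/2$ forces the dynamics into the value of $d$ at $0$ — this is precisely where the hypothesis $d(0)=1$ is used, and it explains the paper's emphasis on that assumption. The only points requiring care are the well-definedness of all ratios (positivity of $s$ and $d$, so that $b_n(t)\neq 0$ at each step) and the fact that convergence is asserted pointwise in $t$, with no uniformity claimed or needed; monotonicity of $d$ is not actually used in this noiseless statement.
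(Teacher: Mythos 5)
Your proposal is correct and is essentially the paper's own argument: both derive the closed forms $b_n(t)=s(t)\,d\!\left(t/2^{\,n+1}\right)$, $a_n(t)=s(t)\,d\!\left(t/2^{\,n}\right)$, $r_n(t)=d(t)/d\!\left(t/2^{\,n+1}\right)$ by induction and then conclude pointwise via continuity of $d$ at $0$ and $d(0)=1$. The only difference is bookkeeping --- you telescope $b_n$ through the normalization $g_n=b_n/s$ and the observation $a_{n+1}=b_n$, while the paper telescopes the ratio $r_n$ directly --- and your remark on positivity of $d$ for well-defined ratios is a shared implicit assumption, not a divergence.
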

\begin{proof}
Let us fix an arbitrary $t > 0$.
Since we have $r_0(t) = \frac{d(t)}{d(\frac{t}{2})}$ by induction follows:
\begin{align*}
r_n(t) &= \frac{a_0(t)}{b_n(t)} 
        = \frac{a_0(t)}{\frac{b_0(t)}{r_{n-1}(\frac{t}{2})}} 
        = \frac{a_0(t)}{b_0(t)} \cdot r_{n-1}\left(\frac{t}{2}\right)
        = \frac{a_0(t)}{b_0(t)} \cdot \frac{a_0(\frac{t}{2})}{b_0(\frac{t}{2})} \cdot r_{n-2}\left(\frac{t}{2^2}\right) \\
        &= \ldots
        = \prod_{i=0}^{n-1} \frac{a_0(\frac{t}{2^i})}{b_0(\frac{t}{2^i})} \cdot r_{0}\left(\frac{t}{2^n}\right)
        = \prod_{i=0}^{n} \frac{d(\frac{t}{2^i})}{d(\frac{t}{2^{i+1}})}
        = \frac{d(t)}{d(\frac{t}{2^{n+1}})}.
\end{align*}
Calculating $b_n(t)$ yields
\begin{align*}
b_n(t)  = \frac{b_0(t)}{r_{n-1}(\frac{t}{2})} 
        = \frac{s(t) \cdot d(\frac{t}{2})}{\frac{d(\frac{t}{2})}{d(\frac{t}{2^{n+1}})}}
        = s(t) \cdot d\left(\frac{t}{2^{n+1}}\right)
\end{align*}
and similarly for $a_n(t)$, we have
\begin{align*}
    a_n(t) = s(t) \cdot d\left(\frac{t}{2^{n}}\right).
\end{align*}
By taking the limit $n \xrightarrow{} \infty$, we get
\begin{align*}
    \lim_{n \to \infty}b_n(t) = \lim_{n \to \infty}s(t) \cdot d\left(\frac{t}{2^{n+1}}\right) = s(t) \cdot d\left(\frac{t}{\lim_{n \to \infty}2^{n+1}}\right) = s(t) \cdot d(0) = s(t),
\end{align*}
from where it immediately follows $\lim_{n \to \infty} a_n(t) = s(t)$.
Moreover, the ratio $r_n(t)$ converges to $d(t)$, which follows from continuity of $d$ as
\begin{align*}
    \lim_{n \to \infty} r_n(t) = \lim_{n \to \infty} \frac{d(t)}{d(\frac{t}{2^{n+1}})}
    =  \frac{d(t)}{\lim_{n \to \infty} d(\frac{t}{2^{n+1}})}
    = \frac{d(t)}{d(\frac{t}{\lim_{n \to \infty} 2^{n+1}})}
    = \frac{d(t)}{d(0)} = d(t).
\end{align*}
\end{proof}

Next, we will generalize the obtained result to an arbitrary exposure function by first stating a straightforward corollary, the proof of which immediately follows from the proposition~\ref{thm:correction_both}.

\begin{corollary}[$\algcorrectone$ for $\expa(t) = t$ and $\expb(t) = \frac{t}{k}$, $k > 1$]
    Let $a_0(t) = s(t) \cdot d(t)$ and $b_0(t) = s(t)  \cdot d(\frac{t}{k})$ for $t \geq 0$, where $s(t)>0$ is the ground-truth signal, $d \,:\, \R_{\geq 0} \to [0,1]$ is a continuous degradation function with $d(0) = 1$  and $k > 1$ is an arbitrary sampling rate parameter.
    If we run algorithm for $n=0,1,\ldots$ ~:
    \begin{align*}
        r_n(t) = \frac{a_0(t)}{b_n(t)}, ~~
        a_{n+1}(t) = \frac{a_0(t)}{r_n(t)}, ~~
        b_{n+1}(t) = \frac{b_0(t)}{r_n(\frac{t}{k})},
    \end{align*}
    then we have $\forall \, t \geq 0: \lim_{n \to \infty} a_n(t) = \lim_{n \to \infty}b_n(t) = s(t)$ and $\lim_{n \to \infty} r_n(t) = d(t)$.
\end{corollary}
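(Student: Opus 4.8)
The plan is to follow the proof of Proposition~\ref{thm:correction_both} almost verbatim, replacing the dyadic contraction factor $2$ by the general factor $k$; the only property of $k$ that matters is that $k>1$.

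First I would fix an arbitrary $t>0$ (the case $t=0$ is trivial, since $\expa(0)=\expb(0)=0$ forces $a_n(0)=b_n(0)=s(0)\,d(0)=s(0)$ and $r_n(0)=1$ for every $n$). From the definitions $r_0(t)=a_0(t)/b_0(t)=d(t)/d(t/k)$, and substituting $b_n(t)=b_0(t)/r_{n-1}(t/k)$ into $r_n(t)=a_0(t)/b_n(t)$ gives the one-step relation
\begin{align*}
r_n(t)=\frac{a_0(t)}{b_0(t)}\,r_{n-1}\!\left(\frac{t}{k}\right)=\frac{d(t)}{d(t/k)}\,r_{n-1}\!\left(\frac{t}{k}\right).
\end{align*}
Iterating this relation and telescoping the product, I expect to prove by induction on $n$ that
\begin{align*}
r_n(t)=\prod_{i=0}^{n}\frac{d(t/k^{i})}{d(t/k^{i+1})}=\frac{d(t)}{d(t/k^{n+1})}.
\end{align*}

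Plugging this closed form back into the two correction updates then yields, just as in the proposition, $a_n(t)=s(t)\,d(t/k^{n})$ and $b_n(t)=s(t)\,d(t/k^{n+1})$ for all $n\ge 0$. Letting $n\to\infty$ and using that $k>1$, so $k^{n}\to\infty$ and hence $t/k^{n}\to 0$ and $t/k^{n+1}\to 0$, the continuity of $d$ together with $d(0)=1$ gives $\lim_{n\to\infty}a_n(t)=\lim_{n\to\infty}b_n(t)=s(t)\,d(0)=s(t)$ and $\lim_{n\to\infty}r_n(t)=d(t)/d(0)=d(t)$, which is the claim.

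There is no genuine obstacle here: the argument is a routine parametric generalization of Proposition~\ref{thm:correction_both}. The single point worth a remark is exactly where $k>1$ is used, namely to guarantee that iterating the map $t\mapsto t/k$ sends the argument of $d$ to $0$, which is what makes the normalization $d(0)=1$ bite; for $k=1$ the two sensors are indistinguishable and no correction is possible, while $k<1$ merely swaps the roles of the main and back-up sensor. (One could instead reduce literally to the case $k=2$ via the time change $\phi(t)=t^{\log_k 2}$, which satisfies $\phi(t/k)=\phi(t)/2$, but verifying that the iteration commutes with this reparametrization is more work than the direct induction above.)
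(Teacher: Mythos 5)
Your proof is correct and follows exactly the route the paper intends: the paper states that the corollary ``immediately follows'' from Proposition~\ref{thm:correction_both}, and your argument is precisely that proof with the factor $2$ replaced by $k$, using $k>1$ only to ensure $t/k^{n}\to 0$ so that continuity and $d(0)=1$ apply. Your explicit treatment of $t=0$ and the remark on where $k>1$ enters are fine but add nothing beyond the paper's implicit argument.
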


\begin{proposition}[$\algcorrectone$ for $\expa(t) = t$ and $\expb(t) = e(t)$, $e(t) < t$]
    \label{thm:correction_both_general}
    Let $a_0(t) = s(t) \cdot d(t)$ and $b_0(t) = s(t)  \cdot d(e(t))$ for $t \geq 0$, where $s(t)>0$ is the ground-truth signal, $d \,:\, \R_{\geq 0} \to [0,1]$ is a continuous degradation function with $d(0) = 1$  and $e(t)$ is the exposure function of signal $b$, for which we have $e(0) = 0$ and $e(t) < t$ for all $t > 0$.
    If we run algorithm for $n=0,1,\ldots$ ~:
    \begin{align*}
        r_n(t) = \frac{a_0(t)}{b_n(t)}, ~~
        a_{n+1}(t) = \frac{a_0(t)}{r_n(t)}, ~~
        b_{n+1}(t) = \frac{b_0(t)}{r_n(e(t))},
    \end{align*}
    then we have $\forall \, t \geq 0: \lim_{n \to \infty} a_n(t) = \lim_{n \to \infty}b_n(t) = s(t)$ and $\lim_{n \to \infty} r_n(t) = d(t)$.
\end{proposition}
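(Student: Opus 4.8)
\emph{Proof idea.} The plan is to run exactly the telescoping argument of Proposition~\ref{thm:correction_both}, with the $n$-fold iterate of $e$ in the role that division by $2^{n}$ plays there. Write $e^{(0)}(t)=t$ and $e^{(k)}(t)=e(e^{(k-1)}(t))$ for $k\ge 1$. Since the conclusion involves $d$ evaluated along these iterates and ratios of the form $d(\cdot)/d(\cdot)$, I would first record the standing hypotheses that are implicit in the statement: that $e$ is continuous and maps $\R_{\ge 0}$ into itself (natural for a cumulative exposure), and that $d>0$ so that every quotient below is well defined for every finite $n$. The case $t=0$ is immediate: $e^{(n)}(0)=0$ for all $n$, so $a_n(0)=b_n(0)=s(0)$ and $r_n(0)=1=d(0)$. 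Fix $t>0$ for the rest.

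The first step is the closed form, by induction on $n$. From $b_n(t)=b_0(t)/r_{n-1}(e(t))$ we get the one-step recursion
\begin{align*}
r_n(t)=\frac{a_0(t)}{b_n(t)}=\frac{a_0(t)}{b_0(t)}\,r_{n-1}\bigl(e(t)\bigr)=\frac{d(t)}{d(e(t))}\,r_{n-1}\bigl(e(t)\bigr),
\end{align*}
with base case $r_0(t)=d(t)/d(e(t))$. Applying the recursion to $r_{n-1}(e(t))$, then to $r_{n-2}(e^{(2)}(t))$, and so on, telescopes the product of the factors $d(e^{(i)}(t))/d(e^{(i+1)}(t))$ and leaves $r_n(t)=d(t)/d\bigl(e^{(n+1)}(t)\bigr)$. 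Substituting this back into $a_n(t)=a_0(t)/r_{n-1}(t)$ and $b_n(t)=b_0(t)/r_{n-1}(e(t))$ gives $a_n(t)=s(t)\,d\bigl(e^{(n)}(t)\bigr)$ and $b_n(t)=s(t)\,d\bigl(e^{(n+1)}(t)\bigr)$ (one checks these also hold at $n=0$).

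The only genuinely new ingredient — and the step I expect to be the crux — is the claim $\lim_{n\to\infty}e^{(n)}(t)=0$. For fixed $t>0$ the orbit $\bigl(e^{(n)}(t)\bigr)_{n\ge 0}$ lies in $\R_{\ge 0}$ and is non-increasing (strictly decreasing while positive, since $e(u)<u$ for $u>0$, and absorbed at $0$ since $e(0)=0$), hence it converges to some limit $L\ge 0$. Passing to the limit in $e^{(n+1)}(t)=e\bigl(e^{(n)}(t)\bigr)$ and using continuity of $e$ yields $L=e(L)$; but $e(L)<L$ whenever $L>0$, forcing $L=0$. Continuity of $d$ then closes the argument: $\lim_n b_n(t)=s(t)\,d\bigl(\lim_n e^{(n+1)}(t)\bigr)=s(t)\,d(0)=s(t)$, likewise $\lim_n a_n(t)=s(t)$, and $\lim_n r_n(t)=d(t)/d(0)=d(t)$. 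This specializes to the preceding corollary when $e(t)=t/k$.

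The main obstacle is thus concentrated entirely in the convergence $e^{(n)}(t)\to 0$: the monotone–bounded argument needs the orbit to remain in $\R_{\ge 0}$, and identifying the limit as $0$ rather than a spurious positive value genuinely requires continuity of $e$ — monotonicity alone does not suffice, since a downward jump at a point $L>0$ with $e(L)<L$ but $e$ near-fixing $L$ from the right would let the orbit accumulate at $L$. Everything else is the same bookkeeping as in Proposition~\ref{thm:correction_both}, with the one caveat worth stating explicitly that positivity of $d$ (at least along the relevant orbit) is what keeps all the iterates and ratios well defined at every finite stage.
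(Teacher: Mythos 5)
Your proposal is correct and follows essentially the same route as the paper: the telescoping induction giving $r_n(t)=d(t)/d(e^{(n+1)}(t))$, $a_n(t)=s(t)\,d(e^{(n)}(t))$, $b_n(t)=s(t)\,d(e^{(n+1)}(t))$, and then the fixed-point argument that the monotone orbit $e^{(n)}(t)$ converges to some $L$ with $L=e(L)$, forcing $L=0$ (the paper outsources exactly this step to Proposition~\ref{secondProposition}). Your version is in fact a bit more careful than the paper's, since you state explicitly the implicit hypotheses it relies on — continuity of $e$ to pass the limit inside, monotone boundedness to get existence of the limit, and positivity of $d$ along the orbit so the quotients are defined.
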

\begin{proof}
Let us fix an arbitrary $t > 0$ and define $e^n(t) = \underbrace{(e \circ e\circ \cdots \circ e)}_{n \text{ times}}(t)$. Since $r_0(t) = \frac{d(t)}{d(e(t))}$ we have by induction:
\begin{align*}
r_n(t) &= \frac{a_0(t)}{b_n(t)} 
        = \frac{a_0(t)}{\frac{b_0(t)}{r_{n-1}(e(t))}} 
        = \frac{a_0(t)}{b_0(t)} \cdot r_{n-1}(e(t))
        = \frac{a_0(t)}{b_0(t)} \cdot \frac{a_0(e(t))}{b_0(e(t))} \cdot r_{n-2}(e(e(t))) \\
        &= \ldots
        = \prod_{i=0}^{n-1} \frac{a_0(e^i(t))}{b_0(e^i(t))} \cdot r_{0}(e^n(t))
        = \prod_{i=0}^{n} \frac{d(e^i(t))}{d(e^{i+1}(t))}
        = \frac{d(t)}{d(e^{n+1}(t))}.
\end{align*}
Calculating $b_n(t)$ yields
\begin{align*}
b_n(t)  = \frac{b_0(t)}{r_{n-1}(e(t))} 
        = \frac{s(t) \cdot d(e(t))}{\frac{d(e(t))}{d(e^{n+1}(t))}}
        = s(t) \cdot d(e^{n+1}(t))
\end{align*}
and similarly for $a_n(t)$, we have
\begin{align*}
    a_n(t) = s(t) \cdot d(e^{n}(t)).
\end{align*}
By the same reasoning as in proposition~\ref{secondProposition}, we get $\lim_{n \to \infty} d(e^{n+1}(t)) = 0$ and thus by taking the limit $n \xrightarrow{} \infty$ on $b_n(t)$ we get
\begin{align*}
    \lim_{n \to \infty}b_n(t) = \lim_{n \to \infty}s(t) \cdot d\left(e^{n+1}(t)\right) = s(t) \cdot d\left(0\right) = s(t),
\end{align*}
from where it immediately follows $\lim_{n \to \infty} a_n(t) = s(t)$.
Moreover, the ratio $r_n(t)$ converges to $d(t)$, which follows from continuity of $d$ as
\begin{align*}
    \lim_{n \to \infty} r_n(t) = \lim_{n \to \infty} \frac{d(t)}{d(e^{n+1}(t))}
    =  \frac{d(t)}{\lim_{n \to \infty} d(e^{n+1}(t))}
    = \frac{d(t)}{d(0)} = d(t).
\end{align*}
\end{proof}

\begin{theorem}
    \label{thm:correctone}
    Let $a_0(t) = s(t) \cdot d(e_a(t))$ and $b_0(t) = s(t)  \cdot d(e_b(t))$ for $t \geq 0$, where $s(t)>0$ is the ground-truth signal, $d \,:\, \R_{\geq 0} \to [0,1]$ is a continuous degradation function with $d(0) = 1$ and $e_a(t), e_b(t):[0, \infty)\to[0, \infty)$ are the continuous exposure function of signal $a$ and $b$ respectively.
    Let us further assume $e_a(0) = e_b(0)= 0$, $e_b(t) < e_a(t)$ for all $t > 0$ and that there exist function $e_a^{-1}:[0, \infty)\to[0, \infty)$. If we run algorithm for $n=0,1,\ldots$ ~:
     \begin{align*}
        r_n(t) = \frac{a_0(t)}{b_n(t)}, ~~
        a_{n+1}(t) = \frac{a_0(t)}{r_n(t)}, ~~
        b_{n+1}(t) = \frac{b_0(t)}{r_n((e_a^{-1}\circ e_b)(t))},
    \end{align*}
    then we have $\forall \, t \geq 0: \lim_{n \to \infty} a_n(t) = \lim_{n \to \infty}b_n(t) = s(t)$.
\end{theorem}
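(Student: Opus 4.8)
The plan is to reduce Theorem~\ref{thm:correctone} to Proposition~\ref{thm:correction_both_general} by reparametrizing time through the exposure of sensor $a$. First I would record the structural consequences of the hypotheses on $e_a$: a continuous self-map of $[0,\infty)$ that admits an inverse $e_a^{-1}:[0,\infty)\to[0,\infty)$ is necessarily a strictly increasing homeomorphism, so $e_a^{-1}$ is itself continuous and order-preserving, and $e_a^{-1}(0)=0$. I then introduce the change of variable $u=e_a(t)$ and define the reparametrized objects $\hat s(u)=s(e_a^{-1}(u))$, $\hat a_n(u)=a_n(e_a^{-1}(u))$, $\hat b_n(u)=b_n(e_a^{-1}(u))$, $\hat r_n(u)=r_n(e_a^{-1}(u))$, together with the new exposure function $\hat e = e_b\circ e_a^{-1}$.

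The second step is to check that in the variable $u$ the iteration is exactly the one analyzed in Proposition~\ref{thm:correction_both_general}. Since $d(e_a(e_a^{-1}(u)))=d(u)$ we get $\hat a_0(u)=\hat s(u)\,d(u)$, and $\hat b_0(u)=\hat s(u)\,d(\hat e(u))$. The $r$- and $a$-updates transform trivially; the one computation worth doing carefully is the $b$-update, where one uses $(e_a^{-1}\circ e_b)(e_a^{-1}(u)) = e_a^{-1}(\hat e(u))$ and $r_n(e_a^{-1}(\cdot)) = \hat r_n(\cdot)$ to obtain $\hat b_{n+1}(u)=\hat b_0(u)/\hat r_n(\hat e(u))$, which matches the proposition's $b$-update with $e$ replaced by $\hat e$. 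It then remains to verify that $\hat e$ meets the proposition's hypotheses: $\hat e$ is continuous as a composition of continuous maps, $\hat e(0)=e_b(0)=0$, and for $u>0$, writing $t=e_a^{-1}(u)>0$, the assumption $e_b(t)<e_a(t)$ gives $\hat e(u)=e_b(t)<e_a(t)=u$; also $\hat s>0$ since $s>0$.

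Finally, I apply Proposition~\ref{thm:correction_both_general} with ground-truth signal $\hat s$ and exposure function $\hat e$ to conclude $\lim_{n\to\infty}\hat a_n(u)=\lim_{n\to\infty}\hat b_n(u)=\hat s(u)$ for every $u\ge 0$ (and, as a by-product, $\lim_{n\to\infty}\hat r_n(u)=d(u)$). Substituting $u=e_a(t)$ and using $e_a^{-1}(e_a(t))=t$ translates this back to $\lim_{n\to\infty}a_n(t)=\lim_{n\to\infty}b_n(t)=s(t)$ for all $t\ge 0$ (and $\lim_{n\to\infty}r_n(t)=d(e_a(t))$). The only genuinely delicate point is the opening observation that invertibility plus continuity of $e_a$ forces $e_a^{-1}$ to be a continuous, increasing map fixing $0$: this is precisely what is needed both for the inequality $\hat e(u)<u$ and for the continuity of $\hat e$ that drives the iterates $\hat e^{\,n}(u)\to 0$ inside Proposition~\ref{thm:correction_both_general}; everything else is bookkeeping with the change of variables.
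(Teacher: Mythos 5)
Your proposal is correct and is essentially the paper's argument: both reduce the theorem to Proposition~\ref{thm:correction_both_general} by composing with $e_a^{-1}$ — the paper does the bookkeeping on the original time axis, replacing the degradation by $h = d\circ e_a$ with exposure $e = e_a^{-1}\circ e_b$, while you equivalently conjugate by the time change $u = e_a(t)$, replacing the signal by $\hat s = s\circ e_a^{-1}$ and the exposure by $\hat e = e_b\circ e_a^{-1}$. The one point you add beyond the paper is the explicit observation that a continuous invertible self-map of $[0,\infty)$ is a strictly increasing homeomorphism fixing $0$, which the paper uses implicitly when asserting $e(t)<t$ and the continuity needed inside Proposition~\ref{thm:correction_both_general}; your handling of it is correct.
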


\begin{proof}
    Let $h(t) = d(e_a(t))$. Then we have that $d(e_b(t)) = h(e_a^{-1}\circ e_b)(t)$. If we denote $e = e_a^{-1}\circ e_b$, then the proposed algorithm transforms to:
    \begin{align*}
        r_n(t) = \frac{a_0(t)}{b_n(t)}, ~~
        a_{n+1}(t) = \frac{a_0(t)}{r_n(t)}, ~~
        b_{n+1}(t) = \frac{b_0(t)}{r_n(e(t))},
    \end{align*}
    with the initial setting: $a_0(t) = s(t) \cdot h(t)$ and $b_0(t) = s(t)  \cdot h(e(t))$.
    Since $e_b(t) < e_a(t)$ holds, $e(t) < t$ holds as well, and since $e_a(0) = e_b(0)= 0$ holds, $e(0) = 0$ and $h(0)= d(e_a(0)) = d(0) = 1$ holds as well.
    Since the assumptions from proposition~\ref{thm:correction_both_general} are satisfied, we are done.
\end{proof}

We have established that \algcorrectone in the limit recovers the ground-truth signal and the degradation function. The following remark describes the rate of this convergence.

\begin{remark}[Rate of convergence]
Both \algcorrectone and \algcorrectboth, in case when $\expa(t) = t$ and $\expb(t) = \frac{t}{k}$ holds, converge in $\mathcal{O}(\log \frac{t}{\delta}) = \mathcal{O}(\log t)$ up to any fixed precision $\delta > 0$. This trivially follows from $\frac{t}{k^{n+1}} = \delta$.
\end{remark}

We examined how to extend our analysis from noiseless to noisy measurement model.
Since the noise is assumed to be additive and Gaussian, and the expectation of a reciprocal Gaussian random variable does not exist, we were not able to show above claims for the noisy measurement model.
However, empirical results suggest that it can be generalized to the noisy case.
We suspect this might be due to noise reduction in $\fitcurve$ procedure.
Furthermore, the assumption on $d(0) = 1$ was proven to be important by empirical analysis.

\section{Data Fusion}
\label{section:DataFusion}
Here we suppose that signals have been corrected for degradation and are thus given by two time series of measurements of the ground-truth signal $\sigs(t)$, $\timeseries{\siga_c}{k}{\signa}$ and $\timeseries{\sigb_c}{k}{\signb}$.
Measurements are obtained through a noisy measurement process, defined by \cref{eq:fusion_model_noise} (cf.~\cref{eq:degradation_model_noise}\footnote{In fact, since each measurement is corrected by the multiplication of the inverse value of the degradation, the variance of the initial noise changes with degradation, e.g. for signal $\siga$ to $\sigma_\siga(t) = \frac{\sigma_\siga}{d_c(\expa(t))^2}$. However, our data-fusion framework can easily be extended for such case when designing the $\diag(\vec{\sigma}^2(t))$.}), where $\noisea$ and $\noiseb$ are white noise signals and are independent of each other.
\begin{align}
\label{eq:fusion_model_noise}
\begin{aligned}
\siga_c(t) &= \sigs(t) + \noisea(t), && \noisea(t) \sim \mathcal{N}(0,\,\sigma_\siga^{2}), \\
\sigb_c(t) &= \sigs(t) + \noiseb(t), && \noiseb(t) \sim \mathcal{N}(0,\,\sigma_\sigb^{2}).
\end{aligned}
\end{align}
We would like to utilize the information embedded in the observations $\timeseries{\siga_c}{k}{\signa}$ and $\timeseries{\sigb_c}{k}{\signb}$ to obtain posterior belief about the signal $\sigs$.

The underlying process of $\sigs$ is assumed to be Brownian motion.
Kalman filter proposed by \citet{kalman1960} provides a powerful and efficient estimation method.
However, it is limited to discrete-time process model.
Since the process model of $s$ is random and not known, we would also like to incorporate time differences between two consecutive measurements.
Gaussian processes provide a powerful framework for this problem.
From now on we assume that $\sigs$ is a Gaussian process with zero mean and a covariance function $k(\cdot, \cdot)$, specified in \cref{eq:gp_definition}.
\begin{align}
\label{eq:gp_definition}
\begin{aligned}
s(t) \sim GP(0, k(t, t'))
\end{aligned}
\end{align}
The zero mean is assumed here since we either do not have or do not want to incorporate any prior knowledge about mean function $\mu$, thus it is common to consequently set it to $0$, i.e. $\mu(\vec x) \equiv \vec0$.
The covariance or the kernel function parameters, denoted by $\params$, are selected by maximizing log-marginal likelihood.

\subsection{Gaussian Processes}

Now, before proceeding, the reader should recall the fundamentals of Gaussian processes (GP), which are in detail described by \citet{rasmussen2005}.
In what follows we propose an advanced application of GPs.
Let us denote the set of observations of Gaussian Process $\sigs$ by $\mathcal{D} = \{(x_i, y_i)\}_{i=1}^n = (\vec{x}, \vec{y})$, where for each $i$ we know which sensor produced the observation.
We generalize the formulation of noise component $\sigma^2 I$ of the covariance matrix to incorporate observations from arbitrarily many different measurement sensor, each with its intrinsic variance of noise.
We replace it with $\diag(\bm{\sigma} ^2)$, where $\bm{\sigma} = [\sigma_1, \ldots, \sigma_n]^\T$ such that
\begin{align*}
    \sigma^2_i = 
    \begin{cases}
        \sigma_a^2 & \text{ if measurement $i$ came from sensor $\siga$,}\\
        \sigma_b^2 & \text{ if measurement $i$ came from sensor $\sigb$,}\\
        \sigma_c^2 & \text{ if measurement $i$ came from sensor $c$,}\\
        & \vdots\\
    \end{cases}
\end{align*}

Since the analysis when differentiating between measuring instruments is similar, but its notation is more tedious, we use simpler notation which assumes that all observations are produced by a single sensor. 

The main limitation of GPs is that given $n$ observations, we need to compute the inverse of a $n \times n$ matrix.
Time complexity of such operation is $\mathcal{O}(n^3)$, which is not scalable, especially when computational resources are limited.
Therefore, in the case when we have millions or even billions of data points a rather cumbersome downsampling has to be performed.
However, we would still like to use the idea of Gaussian processes, so we turn to sparse Gaussian processes (SGP), a much more scalable approach.

To tackle the problem with scalability, a lower bound for $\log p(\vec y|\vec x)$ can be constructed by approximating the exact Gaussian process with its sparse counterpart.
\citet{2016arXiv160604820B} prove the following:
\begin{align*}
    \log p(\vec y|\vec x) \ge -\frac{1}{2}\vec y^T(Q_{\params} + \sigma^2I)^{-1}\vec y - \frac{1}{2}\log |Q_{\params} + \sigma^2I| - \frac{n}{2}\log(2\pi) - \frac{1}{2\sigma^2}\tr(k_{\params}(\vec x, \vec x) - Q_{\params}),
\end{align*}
where vector $\vec u$ is a vector of $m \leq n$ inducing points (pseudo-observations) and $Q_{\params} = k_{\params}(\vec x, \vec u) \cdot k_{\params}(\vec u, \vec u)^{-1} \cdot k_{\params}(\vec u, \vec x)$ holds.
Moreover, \citet{2016arXiv160604820B} showed that right-hand side can be computed in $\mathcal{O}(nm^2)$, which for $m \ll n$ becomes much more tractable. 
The idea then is to maximize the lower-bound with respect to parameters $\params$ and $\vec u$ and make statements about posterior distribution of $f$ based on the optimal parameters.

\citet{2019arXiv190303571B} showed that if we use RBF kernel, $m$ needs to be of order $\log n$ in order for approximation error to go to 0, whereas in the case of Matern kernel with paramter $k + 1/2$, $m$ needs to be of order $n^{1/(2k+1)}$ that the approximation error tends towards 0.
Empirical observations showed that if the number of inducing points is small (in our case a couple of hundreds) then the model is not capable to capture fast fluctuations and instead finds some global trends.
With increasing number of inducing points the model can also capture fast fluctuations.
Moreover, by increasing $m$, Gaussian processes do not overfit the dataset (do not show any additional small scale behavior) if fast fluctuations do not exist.
Therefore, the data-fusion framework with SGPs enables us to observe the signal on multiple timescales, i.e. observing long-term and short-term signal properties.

\section{Results}
\label{section:Results}
In practice both correction methods $\algcorrectone$ and $\algcorrectboth$ perform well and converge.
However, we observe that the former has a slightly faster convergence and thus we use it in all experiments.
In all experiments we use $\smoothmonmodel$ model, however \expmodel and \explinmodel yield similar results.

We first evaluate proposed methods on a synthetic dataset, where the ground-truth signal $s$ is known and is generated by simulating a Brownian motion of length $n$, and signals $a$ and $b$ are the sub-sampled versions of $s$ with added degradation effect.
Figure~\ref{fig:gen_noiseless_signals_a_b} shows the ground-truth signal $s$, degraded signals $a$ and $b$ (left), and signal obtained after running the iterative correction algorithm.
It clearly shows that corrected signals converge to the ground-truth signal.
\begin{figure}[htbp]
    \centering
    \subfloat[Raw signals]{{\includegraphics[width=0.45\textwidth]{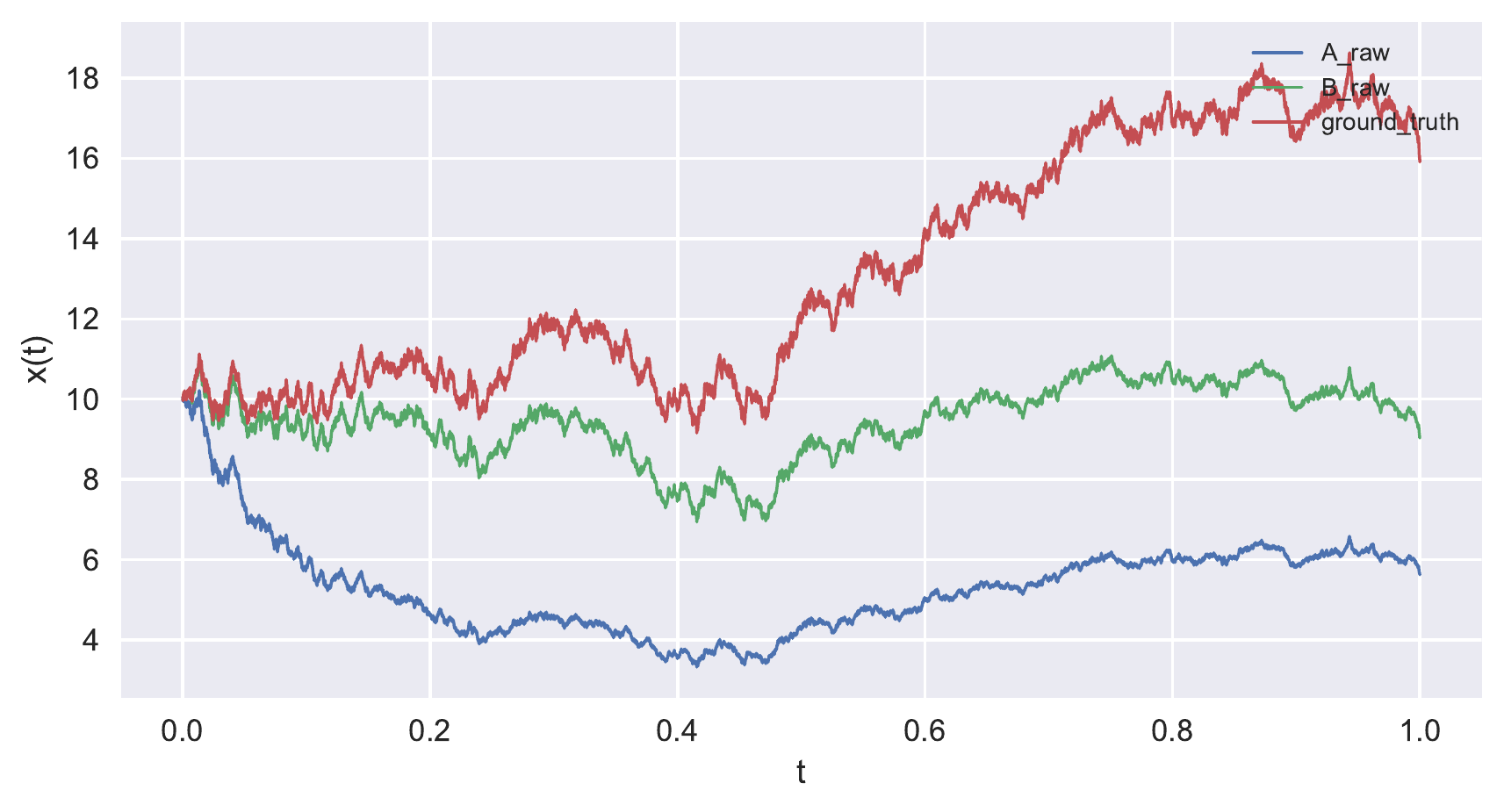} }}%
    \qquad
    \subfloat[Corrected signals]{{\includegraphics[width=0.45\textwidth]{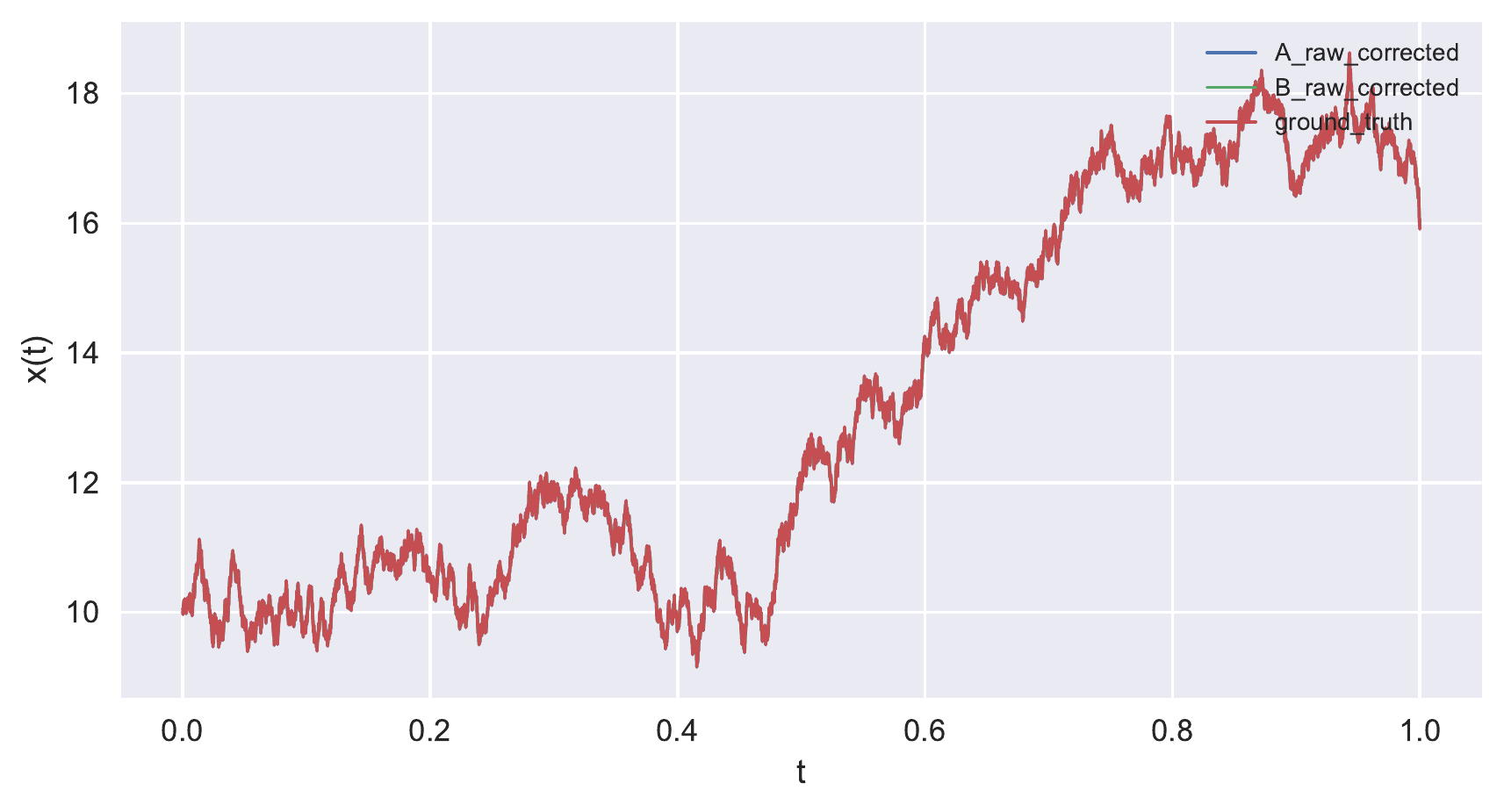}}}%
    \caption{Raw (left) and corrected (right) noise-free synthetic signals.}%
    \label{fig:gen_noiseless_signals_a_b}%
\end{figure}

Next, we consider the general case with additive white noise present.
These signals are shown on the left side of \cref{fig:gen_signals_a_b}, whereas on the right side the initial ratio between signals $a$ and $b$ is visualized.
The ratio is clearly not monotonically decreasing.
However, we argue that all information for degradation correction and retrieval of $s$ is embedded in it.
\begin{figure}[htbp]
    \centering
    \subfloat[Raw signals]{{\includegraphics[width=0.45\textwidth]{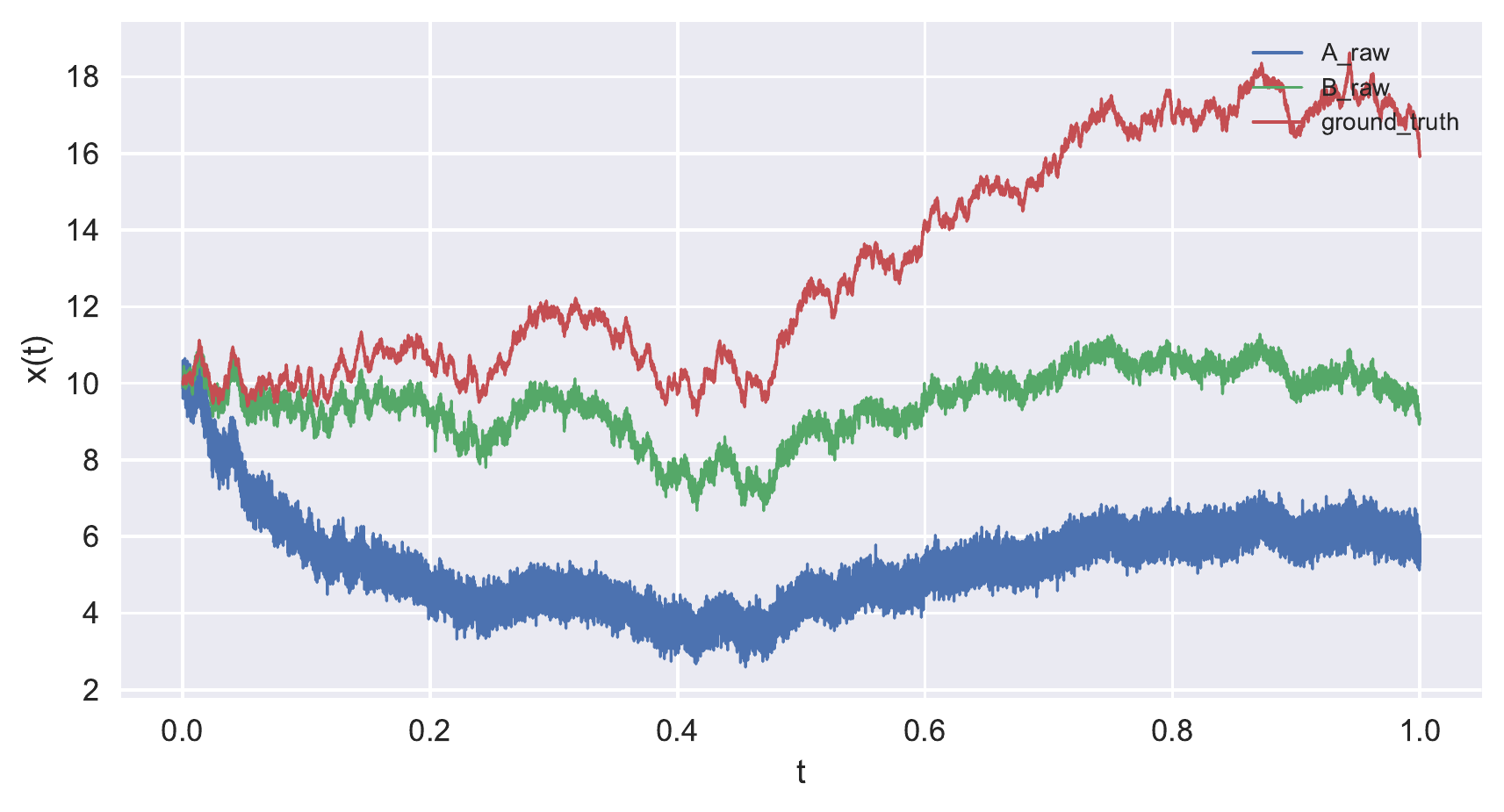} }}%
    \qquad
    \subfloat[Raw ratio]{{\includegraphics[width=0.45\textwidth]{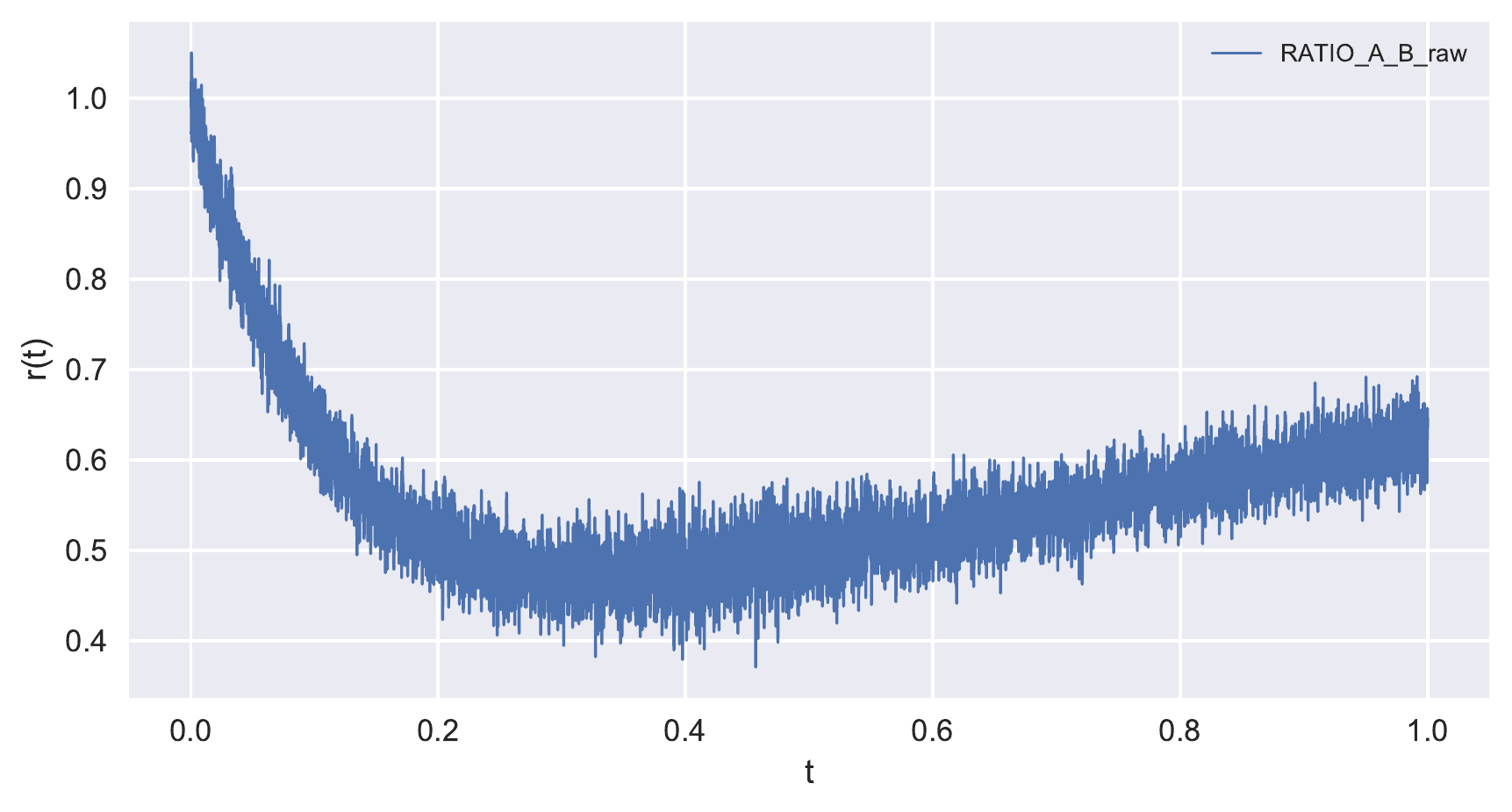}}}%
    \caption{Raw synthetic signals and their ratio together with the ground-truth signal.}%
    \label{fig:gen_signals_a_b}%
\end{figure}
\Cref{fig:gen_corrected_signals_a_b} shows that after performing corrections both signals effectively converge to the ground-truth signal, i.e. the means of noisy signals converge to $\sigs$.
Moreover, the ratio of corrected signals converges to a constant unit function.
The green line represents degradation of signal $a$, which is indeed a monotonically decreasing function.
\begin{figure}[htbp]
    \centering
    \subfloat[Corrected and ground-truth signals]{{\includegraphics[width=0.45\textwidth]{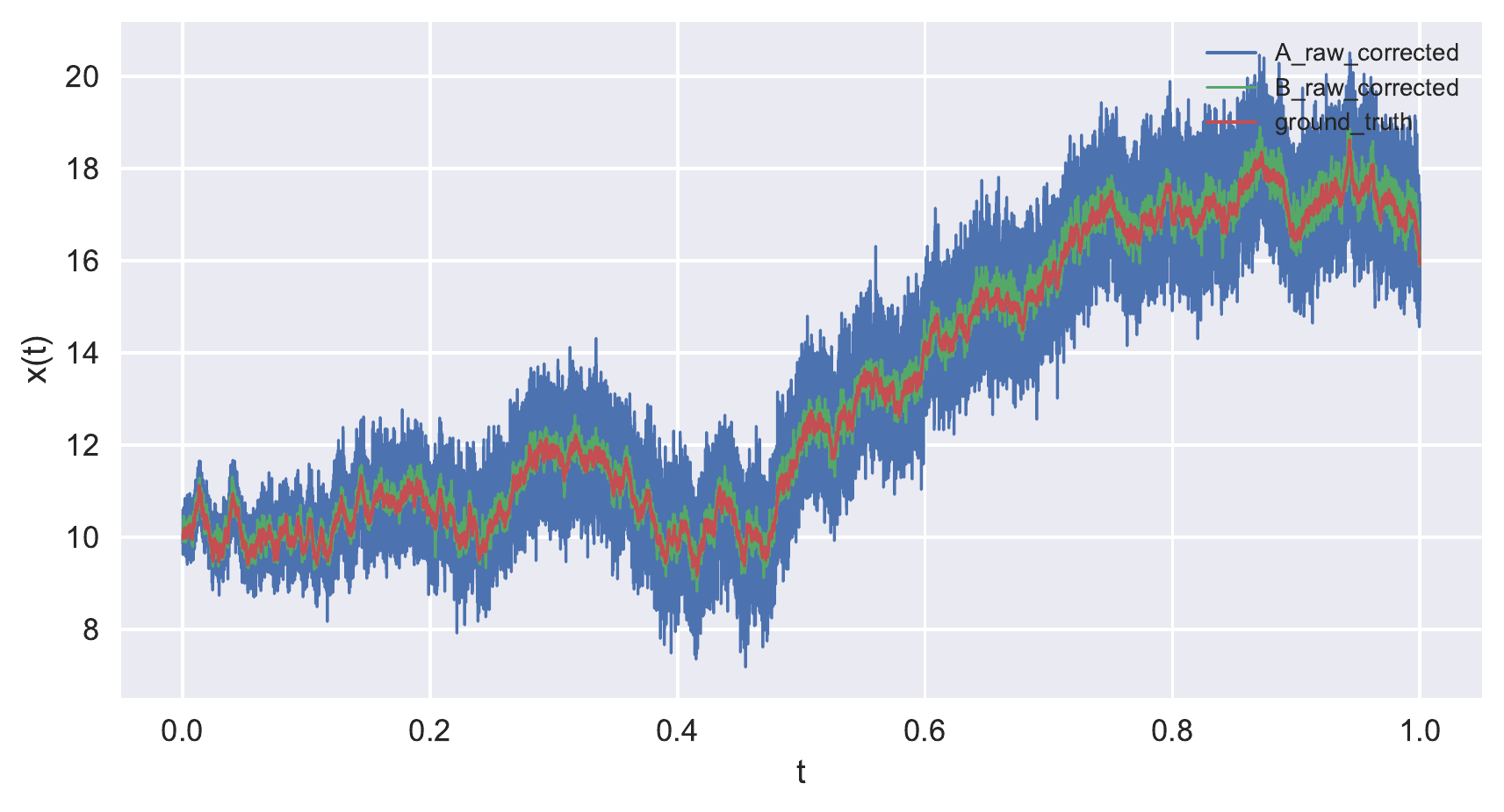}}}%
    \qquad
    \subfloat[Raw and corrected ratios]{{\includegraphics[width=0.45\textwidth]{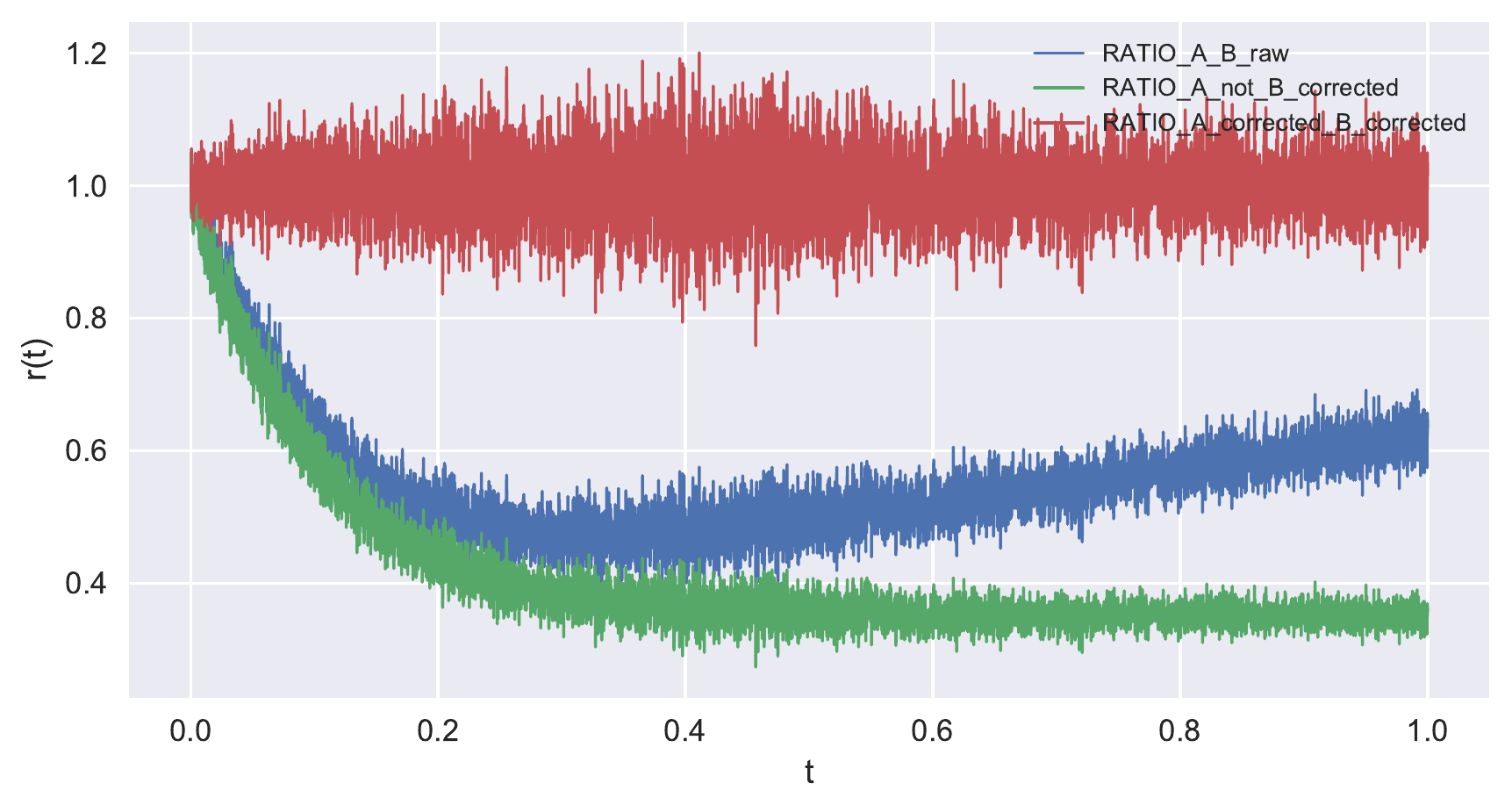}}}%
    \caption{Corrected synthetic signals, and comparison of raw and corrected ratios using $\smoothmonmodel$.}%
    \label{fig:gen_corrected_signals_a_b}%
\end{figure}

Although corrected signals are centered around the ground-truth signal, the correction has amplified the noise, with the amplification being greater as $t$ increases.
This effect was expected, since correction consists of divisions by a value smaller than 1.

We apply the proposed data-fusion method based on SGPs on the corrected signals and visualize the output signals.
The bold line represents the predicted mean and the light area the $95 \, \%$ confidence interval.
In our experiments we use the Matern kernel for the covariance function with $\nu = \frac{1}{2}$, $k(t, t') = \sigma^2 \cdot e^{-\frac{|t - t'|}{l}}$.
In \cref{fig:gen_output_signals_a_b} the output signal together with the data points obtained after correction is plotted.
The output signal fits the ground-truth very well, however, it is a bit smoother because the small number of inducing points cannot capture fast fluctuations.
\begin{figure}[htbp]
    \centering
    \subfloat[$m = 500$]{{\includegraphics[width=0.45\textwidth]{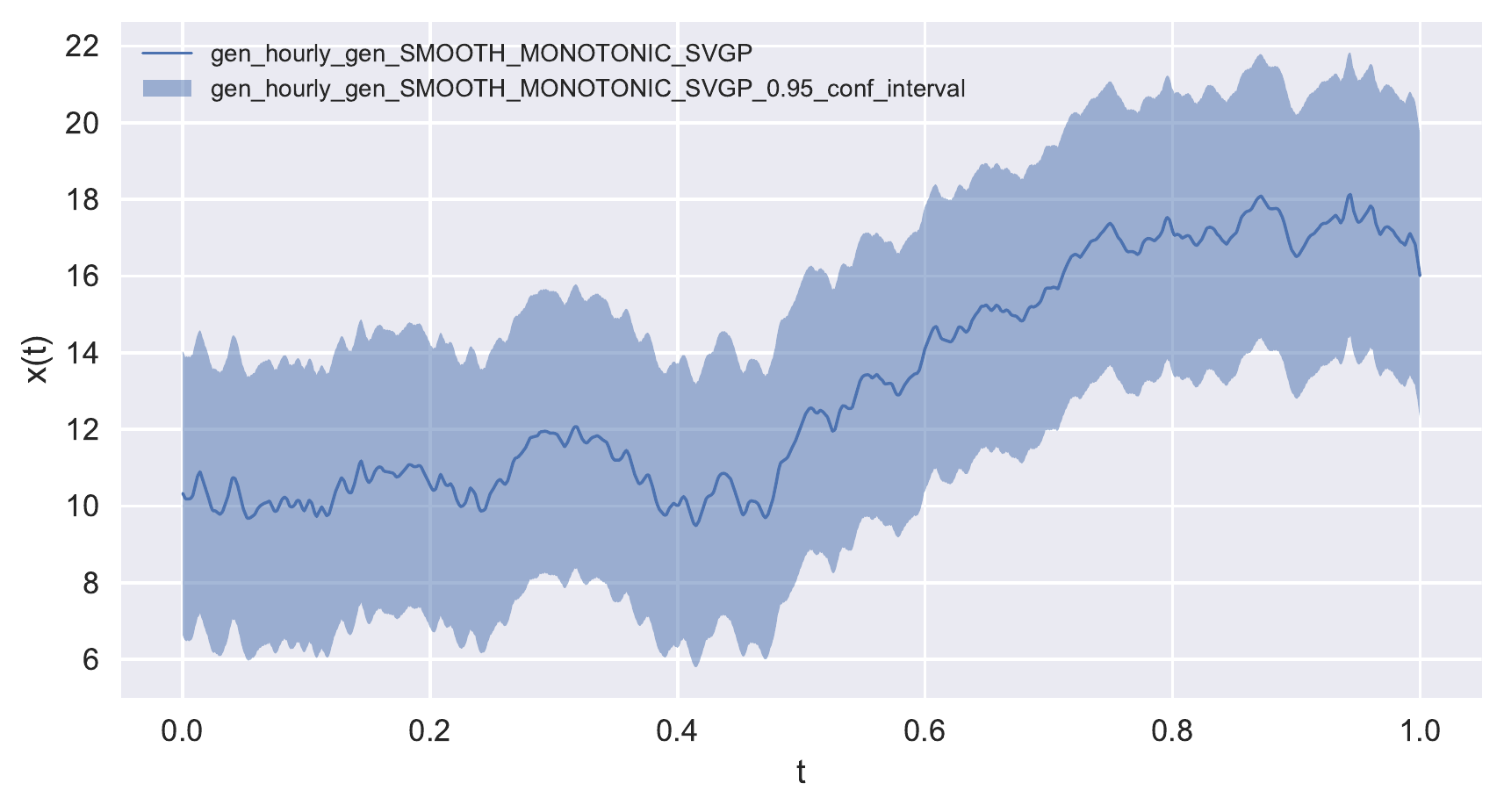}}}%
    \qquad
    \subfloat[Output and ground-truth signals and data points]{{\includegraphics[width=0.45\textwidth]{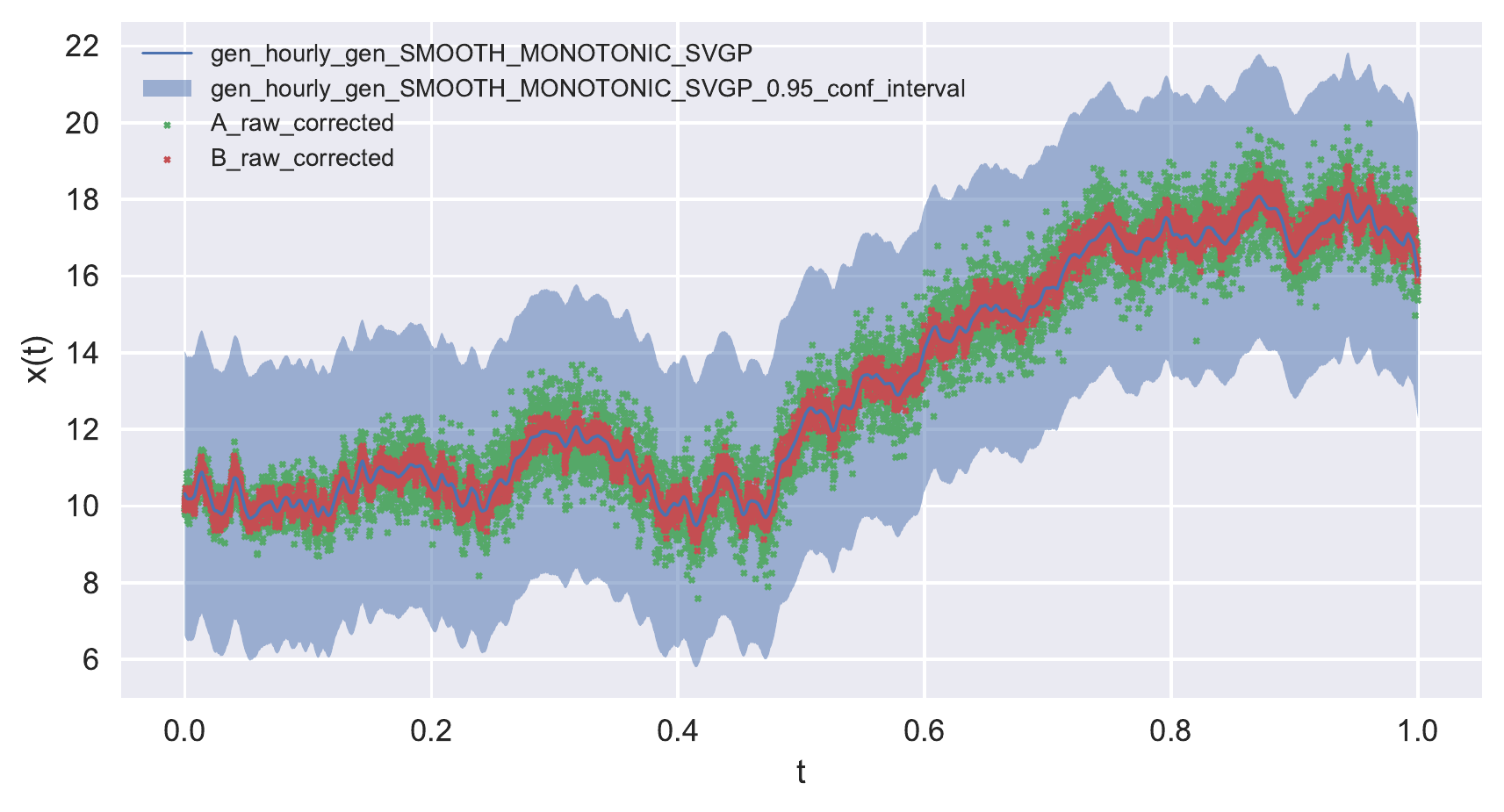}}}%
    \caption{Ground-truth estimation with $95\, \%$ confidence interval using $\svgp$ and $m = 500$.}%
    \label{fig:gen_output_signals_a_b}%
\end{figure}

Sparse Gaussian processes provide a simple and powerful framework for observing the signal on multiple timescales by varying the number of inducing points $m$, which can be seen in \cref{fig:gen_output_signals_a_b} and \cref{fig:gen__multiscale_output_signals_a_b} with $m \in \{100, 300, 500 \}$.
We can observe how by increasing $m$, the level of detail increases.

\begin{figure}[htbp]
    \centering
    \subfloat[$m = 100$]{{\includegraphics[width=0.45\textwidth]{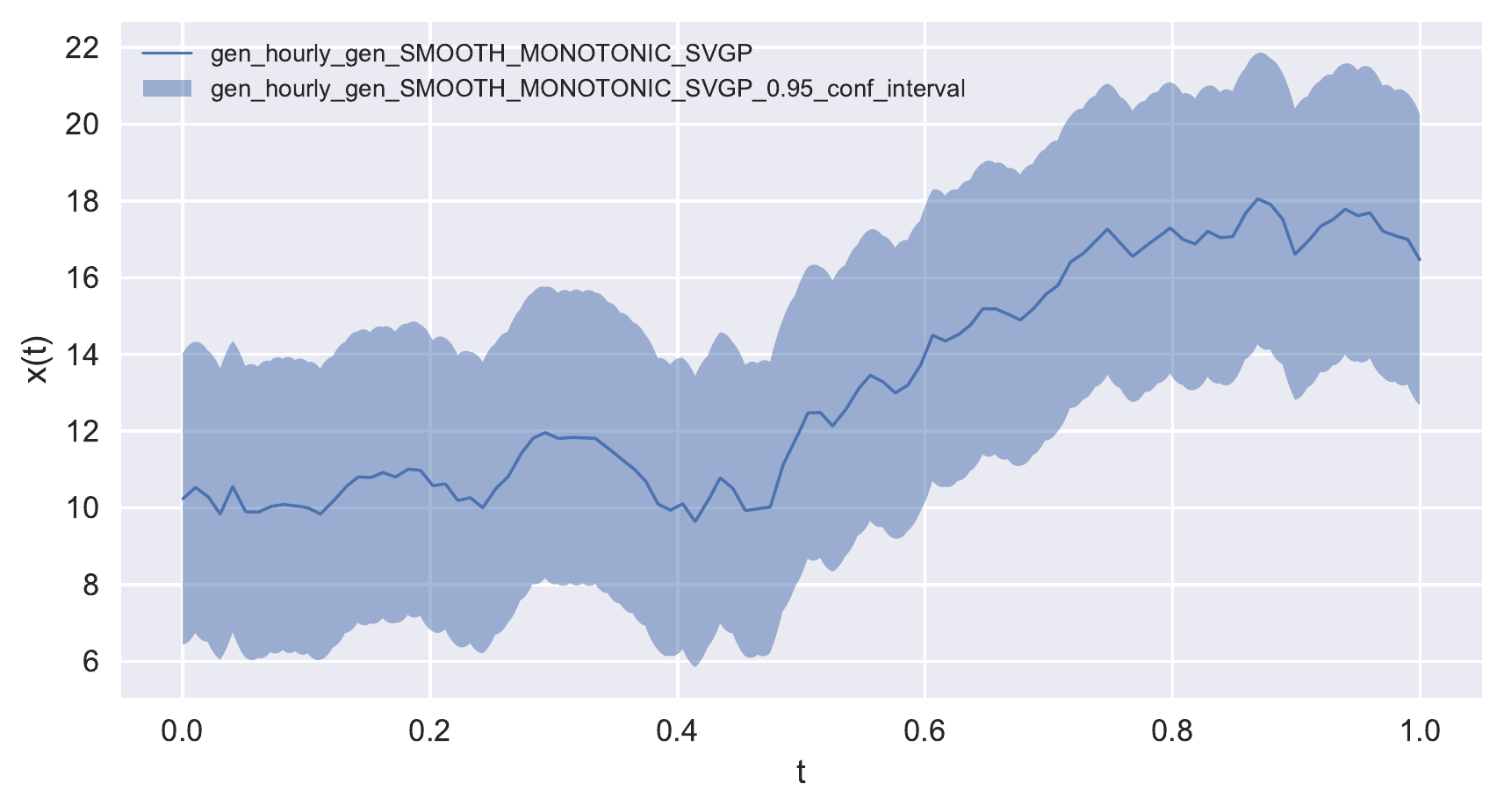}}}%
    \qquad
    \subfloat[$m = 300$]{{\includegraphics[width=0.45\textwidth]{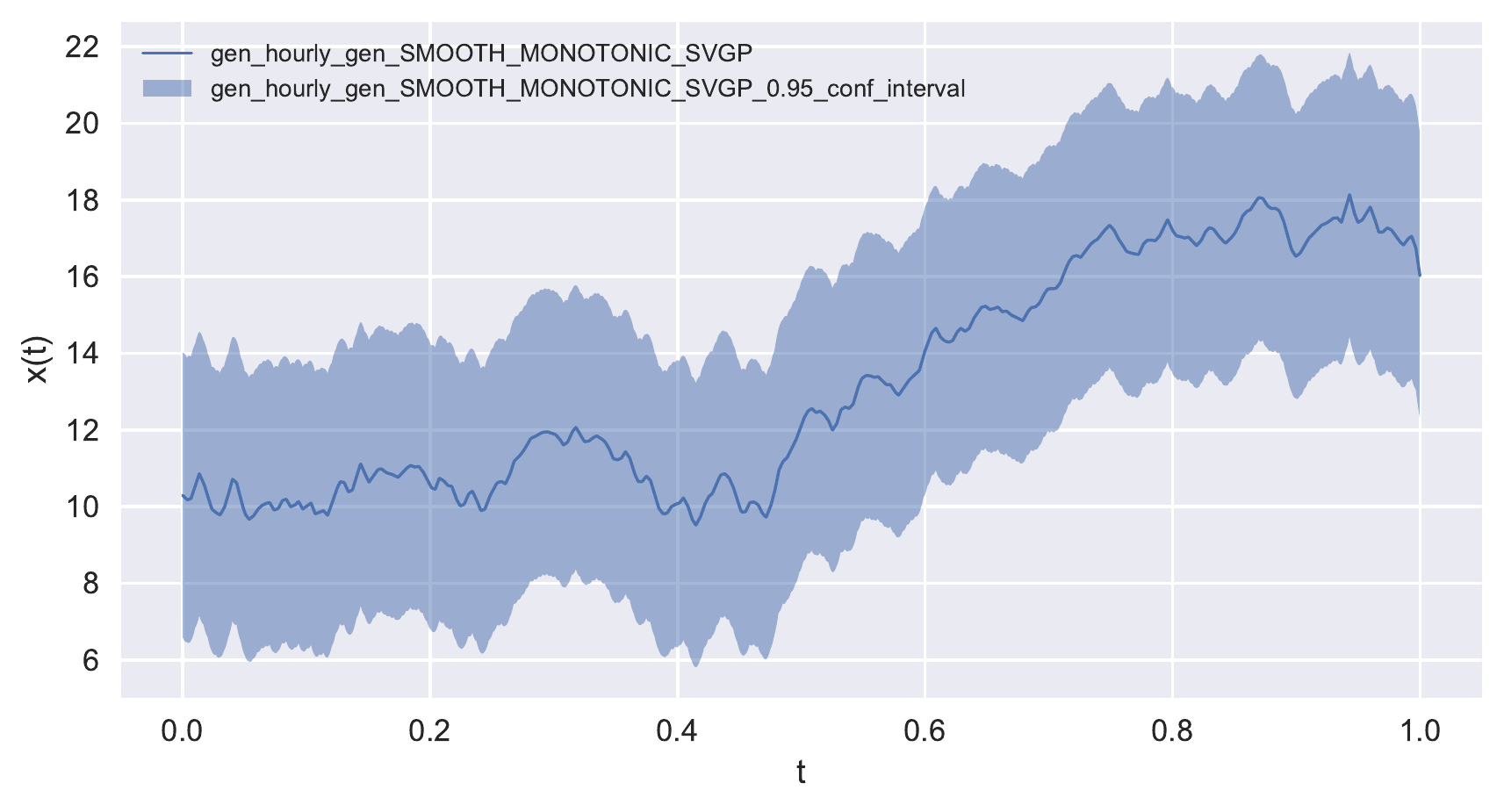}}}%
    \caption{Comparison of observing the ground-truth signal on multiple timescales by varying the number of inducing points $m$ for $m \in \{100, 300\}$ and $m = 500$ in \cref{fig:gen_output_signals_a_b}.}%
    \label{fig:gen__multiscale_output_signals_a_b}%
\end{figure}

Finally, in \cref{fig:gen_convergence} we visualize the first three steps of running the iterative correction algorithm.
If signal $b$ is estimated well, correction for that step is better as well.
This is the main intuition underlying both correction methods.
\begin{figure}[htbp]
    \centering
    \includegraphics[width=1.0\textwidth]{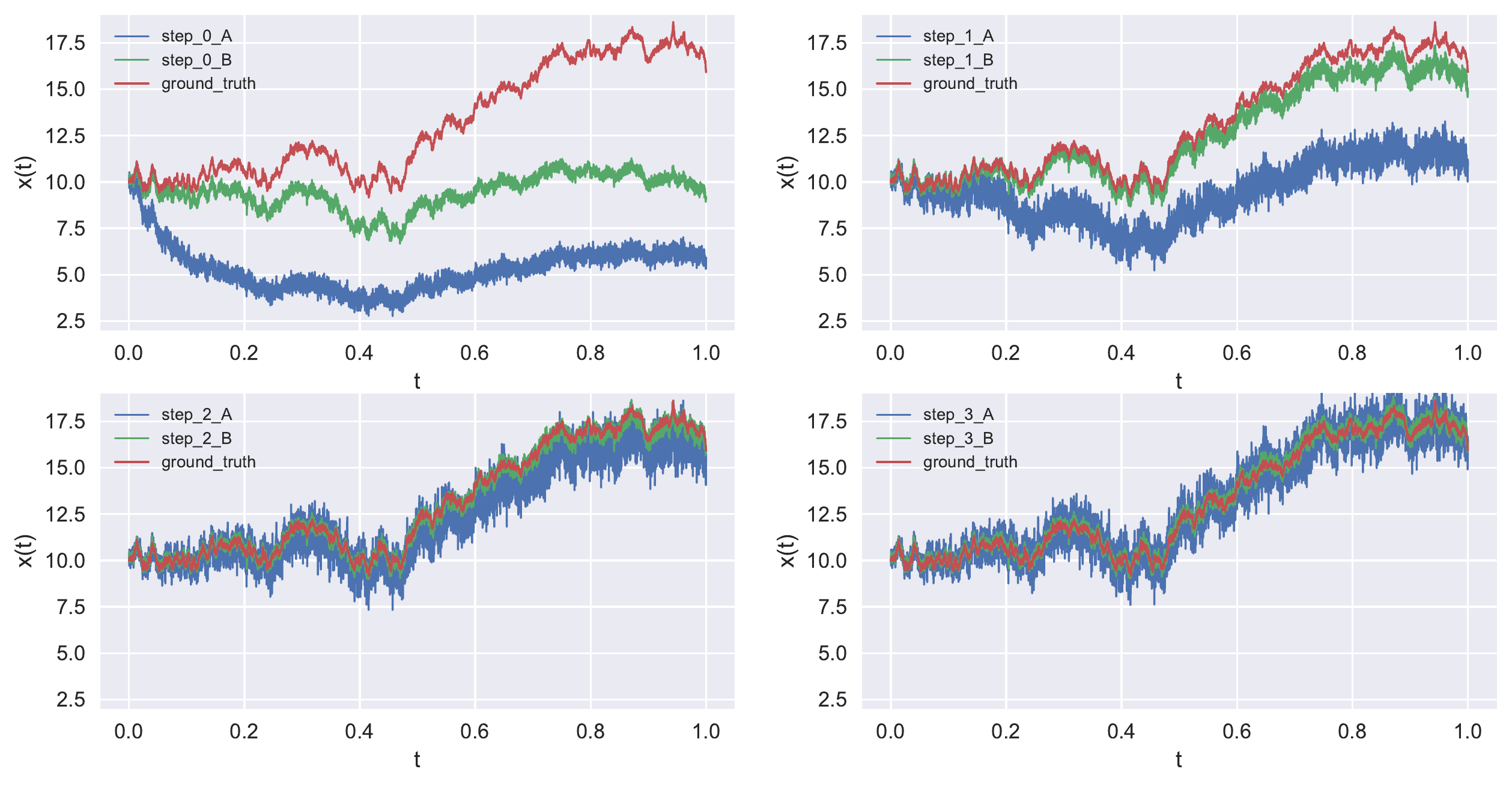}
    \caption{Convergence history.}
    \label{fig:gen_convergence}
\end{figure}

\section{Conclusion}
\label{section:Conclusion}
In the paper we presented a framework how to obtain a posterior belief about the measured signal, given measurements which came from at least two sensors which were exposed different amounts of time. 
The method consists of two parts; the first part serves to correct the degradation, while in the second part we fuse measurements from different sensors using Gaussian processes with a specialized kernel. 
We proved the convergence to the ground-truth for the first part of the method for the noiseless measurement model. 
We left for the future work to analyze the convergence in the presence of noise. 
To fuse the measurements we designed a special kernel which can incorporate the information that measurements come from sensors with possibly different measurement noise variances. 
Due to the large amount of data Sparse Gaussian processes were utilized instead of Gaussian processes, and empirically showed that by choosing large enough number of inducing points we obtain a decent approximation as well as observe the signal on multiple timescales.

\begin{ack}
    This paper is a result of Data Science Lab at ETH Zürich taking part in Autumn semester 2019.
    We would like to thank Prof. Dr. Andreas Krause and Prof. Dr. Ce Zhang for their supervision, invaluable insights, and suggestions on the modeling methods.
    Furthermore, we thank Dr. Wolfgang Finsterle for his domain knowledge support and data preparation.
\end{ack}

\bibliographystyle{apalike}
\bibliography{references}
\newpage

\appendix
\section{Appendix}
\label{section:Appendix}
\subsection{Degradation Iterative Correction}
\label{sec:ap_degradation_correction}

In \cref{section:DegradationModeling} we presented the $\algcorrectone$ iterative correction method. Here its counterpart is described. In $\algcorrectboth$ at every iteration the ratio between the corrected signals is computed and thus in the limit this ratio goes to $1$ as specified in \cref{eq:correct_both}.
\begin{align}
\label{eq:correct_both}
\tag{\algcorrectboth}
\begin{aligned}
r_n(\expa(t)) &= \frac{\siga_n(t)}{\sigb_n(t)} &
\siga_{n+1}(t) &= \frac{\siga_n(t)}{r_n(\expa(t))} &
\sigb_{n+1}(t) &= \frac{\sigb_n(t)}{r_n(\expb(t))} && \text{for $n = 0, 1, \ldots$}
\end{aligned}
\end{align}

\begin{algorithm}[ht]
\caption{$\algcorrectboth(\timeseries{\siga}{k}{\signm}, \timeseries{\sigb}{k}{\signm}, \timeseries{\expa}{k}{\signm}, \timeseries{\expb}{k}{\signm})$} \label{alg:correct_both}
\begin{algorithmic}[1]

\While{$\text{not converged}$} \Comment{E.g. $\norm{\siga_{i+1} - \siga_{i}}{2} /\norm{\siga_{i}}{2} + \norm{\sigb_{i+1} - \sigb_{i}}{2} / \norm{\sigb_{i}}{2} > \varepsilon$}
    \State $r \gets \frac{\siga}{\sigb}$ \Comment{Divide signals $\siga$ and $\sigb$ pointwise, i.e. $r[k] = \frac{\siga[k]}{\sigb[k]}$ $\forall \, k \in [\signm]$}
    \State $f(\cdot) \gets \fitcurve(\timeseries{\expa}{k}{\signm}, \timeseries{r}{k}{\signm})$  \Comment{Learn mapping $f \,:\, \expa \mapsto f(\expa)$}
    \State $\siga \gets \frac{\siga}{f(\expa)}$ \Comment{Correction update of signal $\siga$}
    \State $\sigb \gets \frac{\sigb}{f(\expb)}$ \Comment{Correction update of signal $\sigb$}
\EndWhile

\State $\siga_c \gets \siga; \; \sigb_c \gets \sigb$ \Comment{Corrected signals, $\siga(t) \approx \sigs(t)$ and $\sigb(t) \approx \sigs(t)$}
\State $r_c \gets \frac{\siga}{\sigb_c}$ \Comment{Divide signals $\siga$ and $\sigb_c$ pointwise with $r_c(\expa) \approx d(\expa)$}
\State $d_c(\cdot) \gets \fitcurve(\timeseries{\expa}{k}{\signm}, \timeseries{r_c}{k}{\signm})$ \Comment{Learn degradation function $d(\cdot)$}
\State \textbf{return} $\timeseries{\siga_c}{k}{\signm}, \timeseries{\sigb_c}{k}{\signm}, d_c(\cdot)$ \Comment{Return corrected signals and degradation function}
\end{algorithmic}
\end{algorithm}

\subsection{Convergence Theorems}
\label{sec:ap_convergence_theorems}

Proving the convergence of $\algcorrectboth$ goes along the similar lines as for $\algcorrectone$. For completeness, these proofs are stated next.

\begin{proposition}[$\algcorrectboth$ for $\expa(t) = t$ and $\expb(t) = \frac{t}{2}$]
    \label{firstTheorem}
    Let $a_0(t) = s(t) \cdot d(t)$ and $b_0(t) = s(t)  \cdot d(\frac{t}{2})$ for $t \geq 0$, where $s(t)>0$ is the ground-truth signal and $d \,:\, \R_{\geq 0} \to [0,1]$ is a continuous degradation function with $d(0) = 1$.
    If we run algorithm for $n = 0,1,\ldots$ ~:
    \begin{align*}
        r_n(t) = \frac{a_n(t)}{b_n(t)}, ~~
        a_{n+1}(t) = \frac{a_n(t)}{r_n(t)}, ~~
        b_{n+1}(t) = \frac{b_n(t)}{r_n(\frac{t}{2})},
    \end{align*}
    then it holds $\forall \, t \geq 0: \lim_{n \to \infty} a_n(t) = \lim_{n \to \infty}b_n(t) = s(t)$.
\end{proposition}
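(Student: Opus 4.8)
The plan is to follow the same strategy as the proof of Proposition~\ref{thm:correction_both}, with one organisational twist. Since the update $b_{n+1}(t) = b_n(t)/r_n(t/2)$ evaluates the iterates at the halved argument $t/2$, the recursion at $t$ is coupled to the recursion at $t/2$; so rather than fixing a single $t$ and unrolling along the orbit $t, t/2, t/4, \dots$ (as was convenient for \algcorrectone), I would prove by induction on $n$ a statement that is already quantified over all $t \ge 0$. The invariant $P(n)$ to establish is
\begin{align*}
a_n(t) = s(t)\, d\!\left(\tfrac{t}{2^n}\right) \quad\text{and}\quad b_n(t) = s(t)\, d\!\left(\tfrac{t}{2^{n+1}}\right) \qquad \text{for all } t \ge 0.
\end{align*}
It is worth noting in passing that $a_{n+1}(t) = a_n(t)/r_n(t) = b_n(t)$ identically, which already halves the bookkeeping.

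The base case $P(0)$ is precisely the hypothesis $a_0(t) = s(t)d(t)$, $b_0(t) = s(t)d(t/2)$. For the inductive step, assume $P(n)$; then $r_n(t) = a_n(t)/b_n(t) = d(t/2^n)/d(t/2^{n+1})$ and likewise $r_n(t/2) = d(t/2^{n+1})/d(t/2^{n+2})$, so that
\begin{align*}
a_{n+1}(t) = \frac{a_n(t)}{r_n(t)} = s(t)\, d\!\left(\tfrac{t}{2^{n+1}}\right), \qquad b_{n+1}(t) = \frac{b_n(t)}{r_n(t/2)} = s(t)\, d\!\left(\tfrac{t}{2^{n+2}}\right),
\end{align*}
which is exactly $P(n+1)$.

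It then remains to take limits: fixing $t \ge 0$, both $t/2^n \to 0$ and $t/2^{n+1} \to 0$, so continuity of $d$ at $0$ together with $d(0) = 1$ gives $\lim_{n\to\infty} a_n(t) = s(t)\,d(0) = s(t)$ and $\lim_{n\to\infty} b_n(t) = s(t)$. One also reads off $r_n(t) = d(t/2^n)/d(t/2^{n+1}) \to 1$, consistent with the remark that \algcorrectboth drives the ratio of the corrected signals to the constant unit function. The only genuinely delicate point is well-definedness of all the quotients — one needs $s(t) > 0$ (assumed) and $d$ positive on the relevant range so that no $b_n(t)$ vanishes — but this is the same caveat already implicit in the earlier propositions, and the substantive content of the argument is simply the choice of the invariant $P(n)$, after which everything reduces to one line of algebra plus the continuity limit.
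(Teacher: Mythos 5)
Your proof is correct and takes essentially the same route as the paper's: both establish the closed forms $a_n(t) = s(t)\,d(t/2^{n})$ and $b_n(t) = s(t)\,d(t/2^{n+1})$ and then conclude via continuity of $d$ at $0$ together with $d(0)=1$. The only difference is organisational — you verify the closed form by induction on $n$ with the invariant quantified over all $t$, whereas the paper unrolls $r_n(t) = r_0(t/2^{n})$ and telescopes the resulting product, handling $a_n$ via the same identity $a_{n+1}(t) = b_n(t)$ that you note in passing.
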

\begin{proof}
Let us fix an arbitrary $t > 0$.
Then we observe
\begin{align*}
    r_n(t) = \frac{a_n(t)}{b_n(t)} = \frac{a_{n-1}(t)}{r_{n-1}(t)}\frac{r_{n-1}(\frac{t}{2})}{b_{n-1}(t)} = r_{n-1}\left(\frac{t}{2}\right) = \cdots = r_0\left(\frac{t}{2^n}\right),
\end{align*}
where the last equality follows by induction.
Now, let us focus on the sequence $b_n(t)$:
\begin{align}
    b_n(t) &= \frac{b_{n-1}(t)}{r_{n-1}(\frac{t}{2})} = \frac{b_{n-1}(t)}{r_0(\frac{t}{2^{n}})} \nonumber\\
    &= \frac{b_{n-2}(t)}{r_0(\frac{t}{2^{n}}) \cdot r_{n-2}(\frac{t}{2})} = \frac{b_{n-2}(t)}{r_0(\frac{t}{2^{n}}) \cdot r_0(\frac{t}{2^{n-1}})} \nonumber \\
    &= \frac{b_0(t)}{\prod_{i=1}^{n}r_0(\frac{t}{2^{i}})}. \label{productFormula}
\end{align}
Calculation of ratio $r_0(t)$ gives us
\begin{align*}
    r_0(t) = \frac{a_0(t)}{b_0(t)} = \frac{s(t) \cdot d(t)}{s(t)  \cdot d(\frac{t}{2})} = \frac{d(t)}{d(\frac{t}{2})}.
\end{align*}
Inserting the latter observation and $b_0(t) = s(t) \cdot d(\frac{t}{2})$ into equation~\eqref{productFormula} yields
\begin{align*}
    b_n(t) = \frac{s(t) \cdot  d(\frac{t}{2})}{\prod_{i=1}^{n}\frac{d(\frac{t}{2^{i}})}{d(\frac{t}{2^{i+1}})}} = 
    s(t) \cdot d\left(\frac{t}{2}\right)\frac{\prod_{i=1}^{n}d(\frac{t}{2^{i+1}})}{\prod_{i=1}^{n}d(\frac{t}{2^{i}})} = 
    s(t) \cdot \frac{\prod_{i=1}^{n+1}d(\frac{t}{2^{i}})}{\prod_{i=1}^{n}d(\frac{t}{2^{i}})} = s(t) \cdot d\left(\frac{t}{2^{n+1}}\right).
\end{align*}
The proof is established by sending $n \xrightarrow{} \infty$, which gives
\begin{align*}
    \lim_{n \to \infty}b_n(t) = \lim_{n \to \infty} s(t)  \cdot d\left(\frac{t}{2^{n+1}}\right) = s(t)  \cdot  d\left(\frac{t}{\lim_{n \to \infty}2^{n+1}}\right) = s(t)  \cdot d(0) = s(t).
\end{align*}
\end{proof}
\begin{remark}
    The fact that $\lim_{n \to \infty} a_n(t) = \lim_{n \to \infty}b_n(t)$ holds is easily obtained from
    \begin{align*}
        a_{n+1}(t) = \frac{a_n(t)}{r_n(t)} = \frac{a_n(t) \cdot b_n(t)}{a_n(t)} = b_n(t).
    \end{align*}
\end{remark}

As for $\algcorrectone$ we have a direct corollary for an arbitrary $k > 1$.

\begin{corollary}[$\algcorrectboth$ for $\expa(t) = t$ and $\expb(t) = \frac{t}{k}$, $k > 1$]
    Let $a_0(t) = s(t) \cdot d(t)$ and $b_0(t) = s(t)  \cdot d(\frac{t}{k})$ for $t \geq 0$, where $s(t)>0$ is the ground-truth signal, $d \,:\, \R_{\geq 0} \to [0,1]$ is a continuous degradation function with $d(0) = 1$ and $k > 1$ is an arbitrary sampling rate parameter. If we run algorithm for $n = 0,1,\ldots$ ~:
    \begin{align*}
        r_n(t) = \frac{a_n(t)}{b_n(t)}, ~~
        a_{n+1}(t) = \frac{a_n(t)}{r_n(t)}, ~~
        b_{n+1}(t) = \frac{b_n(t)}{r_n(\frac{t}{k})},
    \end{align*}
    then it holds $\forall \, t \geq 0: \lim_{n \to \infty} a_n(t) = \lim_{n \to \infty}b_n(t) = s(t)$.
\end{corollary}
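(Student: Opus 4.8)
The plan is to mimic the proof of Proposition~\ref{firstTheorem} verbatim, replacing the factor $2$ by the general parameter $k > 1$ throughout. First I would fix an arbitrary $t > 0$ and show by induction that $r_n(t) = r_0\!\left(\frac{t}{k^n}\right)$: using $r_n(t) = \frac{a_n(t)}{b_n(t)} = \frac{a_{n-1}(t)/r_{n-1}(t)}{b_{n-1}(t)/r_{n-1}(t/k)} = r_{n-1}\!\left(\frac{t}{k}\right)$ and unrolling. Next I would unroll the recursion for $b_n$ in the same telescoping manner as in equation~\eqref{productFormula}, obtaining $b_n(t) = \frac{b_0(t)}{\prod_{i=1}^{n} r_0(t/k^i)}$.

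Then I would substitute the closed form $r_0(t) = \frac{a_0(t)}{b_0(t)} = \frac{s(t)\cdot d(t)}{s(t)\cdot d(t/k)} = \frac{d(t)}{d(t/k)}$ together with $b_0(t) = s(t)\cdot d(t/k)$, and observe that the product $\prod_{i=1}^{n}\frac{d(t/k^i)}{d(t/k^{i+1})}$ telescopes, leaving $b_n(t) = s(t)\cdot d\!\left(\frac{t}{k^{n+1}}\right)$. Sending $n \to \infty$ and using $k > 1$ (so that $k^{n+1} \to \infty$ and hence $\frac{t}{k^{n+1}} \to 0$), continuity of $d$ and $d(0) = 1$ give $\lim_{n\to\infty} b_n(t) = s(t)$. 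Finally, as in the remark following Proposition~\ref{firstTheorem}, $a_{n+1}(t) = \frac{a_n(t)}{r_n(t)} = \frac{a_n(t)\cdot b_n(t)}{a_n(t)} = b_n(t)$, so $\lim_{n\to\infty} a_n(t) = \lim_{n\to\infty} b_n(t) = s(t)$; the case $t = 0$ is immediate since $a_n(0) = b_n(0) = s(0)$ for all $n$.

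I do not expect any genuine obstacle here: the corollary is a routine specialization where the only property of $2$ used in the original argument is that it exceeds $1$, which guarantees the geometric contraction $\frac{t}{k^{n}}\to 0$. The only point worth stating explicitly is that this contraction, and hence the appeal to continuity of $d$ at $0$, survives verbatim for every $k > 1$; everything else (the inductive identity for $r_n$, the telescoping product, the identity $a_{n+1} = b_n$) is purely algebraic and independent of the value of $k$.
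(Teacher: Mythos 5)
Your proposal is correct and follows exactly the route the paper intends: the paper states this corollary without proof as a direct generalization of Proposition~\ref{firstTheorem}, and your argument is precisely that proposition's proof with $2$ replaced by $k$ (inductive identity $r_n(t)=r_0(t/k^n)$, telescoping product for $b_n$, continuity of $d$ at $0$, and the identity $a_{n+1}=b_n$). No gaps; the only property of the base used is $k>1$, as you note.
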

\begin{proposition}[$\algcorrectboth$ for $\expa(t) = t$ and $\expb(t) = e(t)$, $e(t) < t$]
    \label{secondProposition}
    Let $a_0(t) = s(t) \cdot d(t)$ and $b_0(t) = s(t)  \cdot d(e(t))$ for $t \geq 0$, where $s(t)>0$ is the ground-truth signal, $d \,:\, \R_{\geq 0} \to [0,1]$ is a continuous degradation function with $d(0) = 1$ and $e(t)$ is the exposure function of signal $b$, for which it holds $e(0) = 0$ and $e(t) < t$ for all $t > 0$. If we run algorithm for $n=0,1,\ldots$ ~:
    \begin{align*}
        r_n(t) = \frac{a_n(t)}{b_n(t)}, ~~
        a_{n+1}(t) = \frac{a_n(t)}{r_n(t)}, ~~
        b_{n+1}(t) = \frac{b_n(t)}{r_n(e(t))},
    \end{align*}
    then it holds $\forall \, t \geq 0: \lim_{n \to \infty} a_n(t) = \lim_{n \to \infty}b_n(t) = s(t)$.
\end{proposition}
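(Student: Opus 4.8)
The plan is to follow the template already used for Proposition~\ref{firstTheorem} (the base case $\expb(t)=\frac{t}{2}$ for \algcorrectboth) and for the \algcorrectone analysis in Proposition~\ref{thm:correction_both_general}, only now with the self-map $e$ playing the role that $t\mapsto \frac t2$ played there. The case $t=0$ is immediate: $a_0(0)=b_0(0)=s(0)$ because $d(e(0))=d(0)=1$, hence $r_0(0)=1$ and the iteration is stationary at $s(0)$. So I fix $t>0$ and write $e^i$ for the $i$-fold composition of $e$ (with $e^0=\mathrm{id}$).

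First I would establish by induction that $r_n(t)=r_0(e^n(t))$. Substituting the update rules gives
\[
r_n(t)=\frac{a_{n-1}(t)/r_{n-1}(t)}{b_{n-1}(t)/r_{n-1}(e(t))}=\frac{a_{n-1}(t)}{b_{n-1}(t)}\cdot\frac{r_{n-1}(e(t))}{r_{n-1}(t)}=r_{n-1}(e(t)),
\]
and iterating yields the claim. Next I would unroll the $b$-recursion: $b_n(t)=b_{n-1}(t)/r_{n-1}(e(t))=b_{n-1}(t)/r_0(e^n(t))$, so $b_n(t)=b_0(t)\,/\,\prod_{i=1}^{n}r_0(e^i(t))$. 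Since $r_0(t)=a_0(t)/b_0(t)=d(t)/d(e(t))$, the product telescopes to $d(e(t))/d(e^{n+1}(t))$, and plugging in $b_0(t)=s(t)\,d(e(t))$ gives the closed form $b_n(t)=s(t)\,d(e^{n+1}(t))$. These are routine steps that mirror the $\expb(t)=\frac t2$ computation verbatim.

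The crux is showing $e^{n+1}(t)\to 0$. Setting $x_n:=e^n(t)$, the hypothesis $e(y)<y$ for $y>0$ together with $e(0)=0$ forces $(x_n)_{n\ge 0}$ to be non-increasing and bounded below by $0$, hence convergent to some $L\ge 0$; passing to the limit in $x_{n+1}=e(x_n)$ and using continuity of the exposure function (this is where I rely on $e$ being continuous, as in the hypotheses of Theorem~\ref{thm:correctone}) gives $e(L)=L$, which contradicts $e(y)<y$ unless $L=0$. Continuity of $d$ then yields $d(e^{n+1}(t))\to d(0)=1$, so $b_n(t)\to s(t)$. Finally, exactly as in the remark following Proposition~\ref{firstTheorem}, $a_{n+1}(t)=a_n(t)/r_n(t)=a_n(t)b_n(t)/a_n(t)=b_n(t)$, so $\lim_n a_n(t)=\lim_n b_n(t)=s(t)$.

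I expect the fixed-point argument for $e^{n+1}(t)\to 0$ to be the only non-mechanical step, and the one place continuity of $e$ is genuinely needed: a monotone but discontinuous self-map with $e(y)<y$ can still have a strictly positive attracting fixed point, so the bare inequalities $e(0)=0$ and $e(y)<y$ do not suffice on their own. Everything else — the induction for $r_n$, the telescoping product, and the identity $a_{n+1}=b_n$ — is bookkeeping identical in spirit to the already-proven cases.
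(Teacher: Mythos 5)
Your proof is correct and follows essentially the same route as the paper's: unrolling the recursion to the closed form $b_n(t) = s(t)\cdot d(e^{n+1}(t))$ and then a fixed-point argument showing $e^{n+1}(t) \to 0$, finishing with continuity of $d$ and the identity $a_{n+1}(t) = b_n(t)$. The only difference is that you are somewhat more careful than the paper: you justify the existence of $\lim_{n\to\infty} e^{n}(t)$ via monotone boundedness and correctly flag that continuity of $e$ (used implicitly by the paper when it writes $\lim_{n} e^{n+1}(t) = e(\lim_{n} e^{n}(t))$, and assumed explicitly only later in Theorem~\ref{thm:correctboth}) is genuinely needed, since without it the decreasing orbit can converge to a positive value that is not a fixed point.
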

\begin{proof}
    Let us fix an arbitrary $t > 0$ and compute $b_1(t)$, which gives
    \begin{align*}
        b_1(t) = \frac{b_0(t)}{r_0(e(t))} = s(t) \cdot d(e(t)) \cdot \frac{d(e(e(t)))}{d(e(t))} = s(t) \cdot d(e(e(t))).
    \end{align*}
    Let $e^n(t) = \underbrace{(e \circ e\circ \cdots \circ e)}_{n \text{ times}}(t)$, then by induction we have
    \begin{align*}
        b_n(t) = s(t) \cdot d(e^{n+1}(t)).
    \end{align*}
    By taking the limit $n \xrightarrow{} \infty$, we get
    \begin{align*}
        \lim_{n \to \infty}b_n(t) &= \lim_{n \to \infty}s(t) \cdot d\left(e^{n+1}(t)\right) = s(t) \cdot d\left(\lim_{n \to \infty}e^{n+1}(t)\right).
    \end{align*}
    Let $\eta = \lim_{n \to \infty}e^{n+1}(t)$, then we have
    \begin{align*}
        \eta = \lim_{n \to \infty}e^{n+1}(t) = e(\lim_{n \to \infty}e^{n}(t)) = e(\eta)
    \end{align*}
    but since $e(\eta) < \eta$ for $\eta > 0$ and $e(0) = 0$ we obtain $\eta = 0$. Hence we conclude this proof as
    \begin{align*}
         \lim_{n \to \infty}b_n(t) = s(t) \cdot d(0) = s(t).
    \end{align*}
\end{proof}
\begin{theorem}
    \label{thm:correctboth}
    Let $a_0(t) = s(t) \cdot d(e_a(t))$ and $b_0(t) = s(t)  \cdot d(e_b(t))$ for $t \geq 0$, where $s(t)>0$ is the ground-truth signal, $d \,:\, \R_{\geq 0} \to [0,1]$ is a continuous degradation function with $d(0) = 1$ and $e_a(t), e_b(t):[0, \infty)\to[0, \infty)$ are the continuous exposure function of signal $a$ and $b$ respectively.
    Let us further assume $e_a(0) = e_b(0)= 0$, $e_b(t) < e_a(t)$ for all $t > 0$ and that there exist function $e_a^{-1}:[0, \infty)\to[0, \infty)$.
    If we run algorithm for $n=0,1,\ldots$ ~:
     \begin{align*}
        r_n(t) = \frac{a_n(t)}{b_n(t)}, ~~
        a_{n+1}(t) = \frac{a_n(t)}{r_n(t)}, ~~
        b_{n+1}(t) = \frac{b_n(t)}{r_n((e_a^{-1}\circ e_b)(t))},
    \end{align*}
    then it holds $\forall \, t \geq 0: \lim_{n \to \infty} a_n(t) = \lim_{n \to \infty}b_n(t) = s(t)$.
\end{theorem}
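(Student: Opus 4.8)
The plan is to mirror exactly the reduction used in the proof of \cref{thm:correctone}, only now invoking \cref{secondProposition} (the \algcorrectboth counterpart of \cref{thm:correction_both_general}) in place of \cref{thm:correction_both_general}. First I would introduce the reparametrized degradation $h(t) = d(e_a(t))$, so that the initial signals become $a_0(t) = s(t)\cdot h(t)$; and, since the assumed inverse $e_a^{-1}$ satisfies $e_a(e_a^{-1}(e_b(t))) = e_b(t)$, also $b_0(t) = s(t)\cdot d(e_b(t)) = s(t)\cdot h\big((e_a^{-1}\circ e_b)(t)\big)$. Writing $e = e_a^{-1}\circ e_b$, the iteration in the statement is literally the iteration of \cref{secondProposition} applied to $h$ and $e$.

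It then remains to verify that $h$ and $e$ meet the hypotheses of \cref{secondProposition}. Continuity of $h$ follows from continuity of $d$ and $e_a$; similarly $e$ is continuous as a composition. For the boundary value, $e(0) = e_a^{-1}(e_b(0)) = e_a^{-1}(0) = 0$ and hence $h(0) = d(e_a(0)) = d(0) = 1$. The one nontrivial point is the strict contraction property $e(t) < t$ for $t > 0$: from $e_b(t) < e_a(t)$ we want to apply $e_a^{-1}$ to both sides and conclude $e_a^{-1}(e_b(t)) < e_a^{-1}(e_a(t)) = t$. Positivity $s(t) > 0$ is inherited directly. Once these checks are in place, \cref{secondProposition} yields $\lim_{n\to\infty} a_n(t) = \lim_{n\to\infty} b_n(t) = s(t)$ for all $t \geq 0$, which is the claim.

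The main obstacle is the step $e_a^{-1}(e_b(t)) < e_a^{-1}(e_a(t))$: this uses that $e_a^{-1}$ is (strictly) increasing, which is not spelled out in the statement — only the bare existence of $e_a^{-1}:[0,\infty)\to[0,\infty)$ is assumed. I would handle this by noting that a continuous function $e_a$ on $[0,\infty)$ possessing a (two-sided) inverse must be strictly monotone, and since $e_a(0)=0$ with range in $[0,\infty)$ it is strictly increasing; its inverse is then strictly increasing as well, legitimizing the inequality. One should also remark that for $e = e_a^{-1}\circ e_b$ to be well defined we need $e_b(t)$ to lie in the range of $e_a$, which is guaranteed here since $e_a^{-1}$ is assumed defined on all of $[0,\infty)$ and $e_b(t) \in [0,\infty)$. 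Modulo these routine topological remarks, the proof is a direct translation through the substitution $h = d\circ e_a$, $e = e_a^{-1}\circ e_b$, exactly as in \cref{thm:correctone}.
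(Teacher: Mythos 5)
Your proposal is correct and follows essentially the same route as the paper's own proof: reparametrize via $h = d\circ e_a$ and $e = e_a^{-1}\circ e_b$, verify $e(0)=0$, $e(t)<t$, $h(0)=1$, and invoke Proposition~\ref{secondProposition}. Your extra remark that $e_a^{-1}(e_b(t)) < e_a^{-1}(e_a(t))$ needs $e_a^{-1}$ to be strictly increasing (which follows from continuity and invertibility of $e_a$ with $e_a(0)=0$) is a justified refinement of a step the paper passes over silently.
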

\begin{proof}
    Let $h(t) = d(e_a(t))$. Then it holds $d(e_b(t)) = h(e_a^{-1}\circ e_b)(t)$. If we denote $e = e_a^{-1}\circ e_b$, then the proposed algorithm transforms to:
    \begin{align*}
        r_n(t) = \frac{a_n(t)}{b_n(t)}, ~~
        a_{n+1}(t) = \frac{a_n(t)}{r_n(t)}, ~~
        b_{n+1}(t) = \frac{b_n(t)}{r_n(e(t))},
    \end{align*}
    with the initial setting: $a_0(t) = s(t) \cdot h(t)$ and $b_0(t) = s(t)  \cdot h(e(t))$.
    Since $e_b(t) < e_a(t)$ holds, $e(t) < t$ holds as well, and since $e_a(0) = e_b(0)= 0$ holds, $e(0) = 0$ and $h(0)= d(e_a(0)) = d(0) = 1$ holds as well.
    Since the assumptions from proposition~\ref{secondProposition} are satisfied, we are done.
\end{proof}

\end{document}